\definecolor{mydarkblue}{rgb}{0,0.08,0.45}
\newcommand{\calB}{\mathcal{B}}
\newcommand{\calD}{\mathcal{D}}
\newcommand{\calF}{\mathcal{F}}
\newcommand{\calS}{\mathcal{S}}
\newcommand{\calW}{\mathcal{W}}
\newcommand{\calZ}{\mathcal{Z}}
\newcommand{\scrG}{\mathscr{G}}
\newcommand{\scrL}{\mathscr{L}}
\newcommand{\scrO}{\mathscr{O}}
\newcommand{\Var}{\mathrm{Var}}
\newcommand{\Ex}{\mathbb{E}}
\newcommand{\zero}{\bm{0}}
\newcommand{\RR}{\mathbb{R}}
\newcommand{\NN}{\mathbb{N}}
\DeclareMathOperator*{\minimize}{minimize}
\newcommand{\sumK}{\sum_{k=1}^K}
\newcommand{\sumn}{\sum_{i=1}^n}
\newcommand{\sumd}{\sum_{i=1}^d}
\newcommand{\sumJ}{\sum_{j=1}^J}
\newcommand{\sfT}{\mathsf{T}}
\newcommand{\of}{\overline{f}}
\renewcommand{\mid}{\,|\,}
\newcommand{\midd}{\,|\kern-0.25ex|\,}
\newcommand{\setn}{\llbracket n\rrbracket}
\newcommand{\setd}{\llbracket d\rrbracket}
\newcommand{\setK}{\llbracket K\rrbracket}
\newcommand{\setJ}{\llbracket J\rrbracket}
\newcommand{\set}[1]{\llbracket #1\rrbracket}
\newcommand{\dotp}[2]{\langle #1, #2\rangle}
\newcommand{\RPP}{\ensuremath{\left(0,\infty\right)}}
\newcommand{\half}{\sfrac12}
\DeclareMathAlphabet\rsfscr{U}{rsfso}{m}{n}
\let\le\leqslant
\let\ge\geqslant
\let\hat\widehat
\let\tilde\widetilde
\DeclareFontFamily{OMX}{MnSymbolE}{}
\DeclareSymbolFont{MnLargeSymbols}{OMX}{MnSymbolE}{m}{n}
\DeclareFontShape{OMX}{MnSymbolE}{m}{n}{
	<-6>  MnSymbolE5
	<6-7>  MnSymbolE6
	<7-8>  MnSymbolE7
	<8-9>  MnSymbolE8
	<9-10> MnSymbolE9
	<10-12> MnSymbolE10
	<12->   MnSymbolE12
}{}
\DeclareFontShape{OMX}{MnSymbolE}{b}{n}{
	<-6>  MnSymbolE-Bold5
	<6-7>  MnSymbolE-Bold6
	<7-8>  MnSymbolE-Bold7
	<8-9>  MnSymbolE-Bold8
	<9-10> MnSymbolE-Bold9
	<10-12> MnSymbolE-Bold10
	<12->   MnSymbolE-Bold12
}{}
\let\llangle\@undefined
\let\rrangle\@undefined
\DeclareMathDelimiter{\llangle}{\mathopen}%
{MnLargeSymbols}{'164}{MnLargeSymbols}{'164}
\DeclareMathDelimiter{\rrangle}{\mathclose}%
{MnLargeSymbols}{'171}{MnLargeSymbols}{'171}
\theoremstyle{definition}
\newtheorem{assumption}{Assumption}
\theoremstyle{plain}
\newtheorem{theorem}{Theorem}
\newtheorem{proposition}{Proposition}
\newtheorem{lemma}{Lemma}
\theoremstyle{remark}
\newtheorem{remark}{Remark}
\crefname{assumption}{Assumption}{Assumptions}
\Crefname{assumption}{Assumption}{Assumptions}
\crefname{problem}{Problem}{Problems}
\Crefname{problem}{Problem}{Problems}
\crefname{example}{Example}{Examples}
\Crefname{example}{Example}{Examples}
\let\le\leqslant
\let\ge\geqslant
\let\hat\widehat
\let\tilde\widetilde
\DeclareFontFamily{OMX}{MnSymbolE}{}
\DeclareSymbolFont{MnLargeSymbols}{OMX}{MnSymbolE}{m}{n}
\DeclareFontShape{OMX}{MnSymbolE}{m}{n}{
	<-6>  MnSymbolE5
	<6-7>  MnSymbolE6
	<7-8>  MnSymbolE7
	<8-9>  MnSymbolE8
	<9-10> MnSymbolE9
	<10-12> MnSymbolE10
	<12->   MnSymbolE12
}{}
\DeclareFontShape{OMX}{MnSymbolE}{b}{n}{
	<-6>  MnSymbolE-Bold5
	<6-7>  MnSymbolE-Bold6
	<7-8>  MnSymbolE-Bold7
	<8-9>  MnSymbolE-Bold8
	<9-10> MnSymbolE-Bold9
	<10-12> MnSymbolE-Bold10
	<12->   MnSymbolE-Bold12
}{}
\let\llangle\@undefined
\let\rrangle\@undefined
\DeclareMathDelimiter{\llangle}{\mathopen}%
{MnLargeSymbols}{'164}{MnLargeSymbols}{'164}
\DeclareMathDelimiter{\rrangle}{\mathclose}%
{MnLargeSymbols}{'171}{MnLargeSymbols}{'171}
\newcommand{\lrdotp}[2]{\left\langle #1, #2 \right\rangle}
\newcommand{\norm}[1]{\left\lVert#1\right\rVert}
\newcommand{\vecnorm}[2]{\left\lVert #1 \right\rVert_{{#2}}}
\newcommand{\matsnorm}[2]{\lvert\kern-0.25ex\lvert\kern-0.25ex\lvert #1 \rvert\kern-0.25ex\rvert\kern-0.25ex\rvert_{#2}}
\newcommand{\onenorm}[1]{\vecnorm{#1}{1}}
\renewcommand{\left}{\mleft}
\renewcommand{\right}{\mright}
\newcommand{\iid}{i.i.d.\xspace~}
\newcommand{\Adam}{\textsc{Adam}\xspace}
\newcommand{\AdamW}{\textsc{AdamW}\xspace}
\newcommand{\SGD}{\textsc{SGD}\xspace}
\newcommand{\AdaGrad}{\textsc{AdaGrad}\xspace}
\newcommand{\ViT}{\textsc{ViT}\xspace}
\newcommand{\DDPNorm}{\textsc{DDP-Norm}\xspace}
\newcommand{\FSDPNorm}{\textsc{FSDP-Norm}\xspace}
\newcommand{\DDP}{\textsf{DDP}\xspace}
\newcommand{\FSDP}{\textsf{FSDP}\xspace}
\newcommand{\ZeRO}{\textsf{ZeRO}\xspace}
\newcommand{\Aim}{\textsc{Aim}\xspace}
\newcommand{\scrGki}{\scrG_{k,i}}
\begin{document}
    \title{\sffamily Adaptive Batch Size Schedules for Distributed Training \\of Language Models with Data and Model Parallelism}
    \author{
        Tim Tsz-Kit Lau%
        \thanks{Equal contribution}
        \thanks{Department of Biostatistics, Epidemiology and Informatics, Perelman School of Medicine, University of Pennsylvania, Philadelphia, PA 19104, USA; Email: 
        \href{mailto:timlautk@upenn.edu}{\texttt{timlautk@upenn.edu}}. Part of the work of Tim Tsz-Kit Lau was performed at The University of Chicago Booth School of Business. }
        \and
        Weijian Li%
        \footnotemark[1]
        \thanks{Department of Computer Science, Northwestern University, Evanston, IL 60208, USA; Email: \href{mailto:weijianli2021@u.northwestern.edu}{\texttt{weijianli2021@u.northwestern.edu}}, \href{mailto:hanliu@northwestern.edu}{\texttt{hanliu@northwestern.edu}}.}
        \and
        Chenwei Xu%
        \thanks{Department of Statistics and Data Science, Northwestern University, Evanston, IL 60208, USA; \href{mailto:chenweixu2023@u.northwestern.edu}{\texttt{chenweixu2023@u.northwestern.edu}}.}
        \and 
        Han Liu%
        \footnotemark[3]
        \footnotemark[4]
        \and Mladen Kolar%
        \thanks{Department of Data Sciences and Operations, University of Southern California Marshall School of Business, Los Angeles, CA 90089, USA; Email: \href{mailto:mkolar@marshall.usc.edu}{\texttt{mkolar@marshall.usc.edu}}. Part of the work of Mladen Kolar was performed at The University of Chicago Booth School of Business. }
    }
    
    \maketitle

\begin{abstract}
    An appropriate choice of batch sizes in large-scale model training is crucial, yet it involves an intrinsic yet inevitable dilemma: large-batch training improves training efficiency in terms of memory utilization, while generalization performance often deteriorates due to small amounts of gradient noise. Despite this dilemma, the common practice of choosing batch sizes in language model training often prioritizes training efficiency---employing either constant large sizes with data parallelism or implementing batch size warmup schedules. However, such batch size schedule designs remain heuristic and often fail to adapt to training dynamics, presenting the challenge of designing adaptive batch size schedules. Given the abundance of available datasets and the data-hungry nature of language models, data parallelism has become an indispensable distributed training paradigm, enabling the use of larger batch sizes for gradient computation. However, vanilla data parallelism requires replicas of model parameters, gradients, and optimizer states at each worker, which prohibits training larger models with billions of parameters. To optimize memory usage, more advanced parallelism strategies must be employed. In this work, we propose general-purpose and theoretically principled adaptive batch size schedules compatible with data parallelism and model parallelism. We develop a practical implementation with PyTorch Fully Sharded Data Parallel, facilitating the pretraining of language models of different sizes. We empirically demonstrate that our proposed approaches outperform constant batch sizes and heuristic batch size warmup schedules in the pretraining of models in the Llama 2 family, with particular focus on smaller models with up to 3 billion parameters. We also establish theoretical convergence guarantees for such adaptive batch size schedules with \Adam for general smooth nonconvex objectives. 
\end{abstract}

\section{Introduction} 
Large-batch training (i.e., using large batch sizes) is arguably the current \emph{de facto} training paradigm for large language models, driven by recent advances and the availability of computational hardware for deep learning. For instance, the open-weight model, Llama 3 405B \citep{meta2024llama3}, utilizes a batch size of 1024 sequences of length 4096, resulting in 4M tokens per batch. Despite the efficient utilization of available hardware through parallelization, a major drawback of large-batch training is the issue of ``generalization gap'' (see e.g., \citep{lecun2002efficient})---where model generalization performance deteriorates compared to small-batch training without heavy tuning of other hyperparameters. See \Cref{fig:large_batch} for a graphical illustration of the existence of generalization gaps with different batch sizes when training a vanilla transformer with 61M parameters. \citet{keskar2017on} argued that small-batch methods tend to converge to flat minima, leading to better generalization. To close this generalization gap, several works \citep{hoffer2017train,smith2018dont,smith2018bayesian} have proposed using large learning rates to offset the effect of large batch sizes, recovering the generalization performance of using small batches. However, the training of language (and vision) models based on the attention mechanism \citep{vaswani2017attention} and the transformer architecture is notoriously unstable. Reducing training instability, including unwanted loss spikes (see e.g., \citep{zhai2023stabilizing,wortsman2024small}), demands significant tuning and cautious hyperparameter selections, like using a small learning rate.
\begin{figure}[t]
    \centering
    \includegraphics[width=5cm]{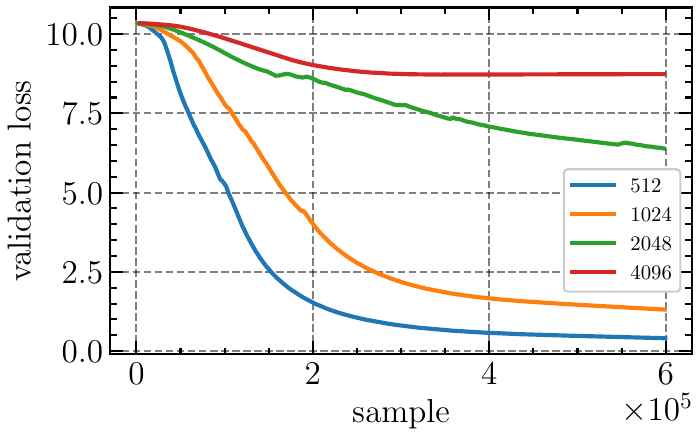}
    \caption{Generalization gap in transformer pretraining. Various curves represent distinct batch sizes.}
    \label{fig:large_batch}
\end{figure}

Beyond using a large learning rate to balance the intrinsic trade-off between training efficiency and generalization performance of large-batch training, \citet{keskar2017on} also suggested the use of \emph{adaptive sampling methods} \citep{byrd2012sample,friedlander2012hybrid}. These methods are essentially adaptive batch size schemes that progressively improve the accuracy of the batch gradient approximation by gradually increasing batch sizes throughout the model training process. This concept has been explored by \citet{de2016big,de2017automated} and \citet{lau2024adadagrad}, but their implementations are limited to the single-device setting, where all data samples are implicitly assumed to reside on the same device. This limitation makes them unfit for data-parallel distributed training wherein data is spread across various workers in a parallel system, potentially encompassing several network-connected nodes, thereby preventing the scaling necessary to train large models. Beyond the single-device setting, \citet{lau2024adaptive_local} have also extended such adaptive batch size schemes to local gradient methods for local batch sizes, where model synchronization is performed every several gradient steps rather than every step. 

Data parallelism \citep{krizhevsky2012imagenet}, such as \texttt{DistributedDataParallel} (\DDP) in PyTorch \citep{li2020pytorch} and counterparts in TensorFlow \citep{tensorflow2015-whitepaper} and JAX \citep{jax2018github,frostig2018compiling}, is arguably the most popular paradigm for distributed training in deep learning. In data parallelism (alone), each worker holds a local copy of the model parameters (as well as gradient and optimizer states). The global input batch is divided into multiple minibatches for each training step, so each worker performs forward and backward computations with a different minibatch. After each training step, all GPUs perform an \emph{all-reduce} collective communication to synchronize gradients, followed by a global model parameter update. This ensures that all local copies of the model remain identical after the parameter update steps. Adaptive batch size schemes can be developed based on the approaches in \citep{byrd2012sample,friedlander2012hybrid,bollapragada2018adaptive} for data-parallel settings, providing practical adaptive batch size schedules in PyTorch \DDP for training large-scale deep neural networks, which require data parallelism. 

While these practical schemes open up the possibility of distributed training of larger models with GPUs of lower memory, they are constrained by the inherent design of \DDP---the need to maintain a model replica at each worker. State-of-the-art large language models (LLMs) now consist of billions or even hundreds of billions of parameters (e.g., Llama 3 405B \citep{meta2024llama3}). Distributed training with only data parallelism thus unfortunately fails, as the memory required to store such models well exceeds the available memory of a single GPU. Even worse, access to expensive workstation-level GPUs with more memory is often limited to industrial labs, whereas academic researchers and end-users often have to resort to less powerful consumer-level GPUs or workstation-level GPUs with less memory.

To alleviate this limitation inherent to data parallelism, more memory-efficient paradigms of parallelism, such as model parallelism \citep{shoeybi2019megatron}, have been proposed. In model parallelism, model parameters are sharded into various components and distributed to different workers. In particular, PyTorch Fully Sharded Data Parallel (\FSDP) \citep{zhao2023pytorch} is an implementation of model parallelism in PyTorch \citep{paszke2019pytorch}, marking the first native feature in PyTorch that can support models with up to trillions of parameters without relying on more sophisticated third-party libraries for model parallelism such as DeepSpeed \citep{rasley2020deepspeed}, Megatron-LM \citep{shoeybi2019megatron,narayanan2021efficient,korthikanti2023reducing}, and their combinations \citep{smith2022using}, which could be overwhelming to get started with and too technical to modify for users' specific needs. Moreover, PyTorch \FSDP has been widely adopted in the pretraining of various open-source language models such as OPT \citep{zhang2022opt}, TinyLlama \citep{zhang2024tinyllama}, OLMo \citep{groeneveld2024olmo,olmo2025olmo2}, and DRBX \citep{mosaic2024drbx}.

However, even with data parallelism and model parallelism, LLM pretraining involving models with up to hundreds of billions of parameters and trillions of tokens (e.g., Llama 3 405B \citep{meta2024llama3}), still incurs extensive costs (more than millions of US dollars per model) and imposes a significant carbon footprint. Consequently, there is a pressing need for developing proper and well-crafted training strategies. In this work, we focus on choosing dynamic batch size schedules, which deserve more attention than they have, since, unlike other optimizer hyperparameters, batch sizes also control training efficiency via memory utilization of GPUs, in addition to affecting model generalization performance and training stability. The current practice of choosing batch sizes in LLM pretraining, however, remains heuristic, in the sense that it usually involves either constant large batch sizes or prespecified heuristic warmup schedules which could be very hard to design.

\paragraph{Contributions.}
In this work, we propose theoretically principled adaptive batch size schedules based on the adaptive sampling method \citep{byrd2012sample} for pretraining large language models, which are also generally applicable to training other deep neural networks. On the theoretical front, we establish a convergence guarantee for the proposed adaptive batch size schedules for \Adam, the \emph{de facto} optimizer for pretraining language models. Various recent works have shown, both empirically and theoretically, that \Adam outperforms \SGD in training attention-based language models \citep{pan2023toward,kunstner2023noise,kunstner2024heavy,zhang2024transformers}. Our convergence guarantee complements the existing results of adaptive batch size schedules for \SGD \citep{de2016big,de2017automated} and \AdaGrad \citep{lau2024adadagrad}. From a practical perspective, we develop a solution of adaptive batch size schedules based on PyTorch \FSDP, which are tailor-made for pretraining LLMs with more than billions of parameters.

\section{Related Work}
\paragraph{Large-batch training of language models.}
Large-batch training  has proven to be very successful for different deep learning applications including computer vision \citep{goyal2017accurate,akiba2017extremely} and natural language processing \citep{you2020large,liu2019roberta,puri2018large}. From an empirical perspective, many open-source or open-weights models, such as OPT \citep{zhang2022opt}, BLOOM \citep{bigscience2022bloom}, Mistral 7B \citep{jiang2023mistral}, Baichuan 2 \citep{yang2023baichuan}, Qwen \citep{bai2023qwen,yang2024qwen2}, OLMo \citep{groeneveld2024olmo,olmo2025olmo2}, Gemma \citep{gemma2024gemma,team2024gemma2}, Llama \citep{touvron2023llama2,meta2024llama3} and DeepSeek \citep{deepseekai2024deepseekv2,deepseekai2024deepseekv3}, revealed that they were pretrained with large numbers of GPUs or TPUs (i.e., data-parallel sizes), hence naturally making use of large-batch training. While using large batch sizes is now standard, the rationale for choosing the magnitude of such large batch sizes is mostly based on hardware availability. Only recently in the training of Stable LM 2 1.6B, \citet{bellagente2024stable} clarified the selection of global batch sizes, aiming to strike an optimal balance between minimizing training time and the extra training tokens needed to reach the desired final training loss. \citet{shallue2019measuring} study the effects of data parallelism by performing ablation studies on different batch sizes by training different models on different datasets using different optimizers, finding no evidence that large batch sizes degrade generalization performance with careful hyperparameter search. From a more theoretical perspective, \citet{mccandlish2018empirical} develop a model for understanding the \emph{critical batch size} that determines the tradeoff between speed and efficiency of large-batch training. \citet{kaplan2020scaling} further study the \emph{scaling law} of the critical batch size as a power of the training loss only for language models. However, in most of these works, benchmarking was performed with different magnitudes of constant batch sizes, with the notable exception of \citet{mccandlish2018empirical} which provided a case study of dynamically varying the batch size with an adaptive batch size schedule, but only using a simple model (CNN) and dataset (SVHN). The effect of adaptive batch sizes for pretraining language models, to the best of our knowledge, remains elusive to the community. 

\paragraph{Batch size schedules.}
Adaptive sampling methods \citep{byrd2012sample,friedlander2012hybrid,bollapragada2018adaptive}, which adjust batch sizes based on gradient noise or gradient approximation quality, are further explored in deep learning \citep{de2016big,de2017automated,lau2024adadagrad,ostroukhov2024adabatchgrad} but have not been applied to data parallelism with distributed samplers. The development of adaptive batch size schedules for deep learning is not a novel concept, featuring methodologies such as Big Batch \SGD \citep{de2016big,de2017automated}, CABS \citep{balles2017coupling}, AdaBatch \citep{devarakonda2017adabatch}, SimiGrad \citep{qin2021simigrad} and AdaScale \SGD \citep{johnson2020adascale}. Our work is also closely related to and motivated by the heuristic technique of batch size warmup/batch ramp, which has been widely adopted in pretraining LLMs and even in reinforcement learning \citep{hilton2023scaling}. Batch size warmup usually involves prespecified schedules of multiple batch size stages starting from training with multiple increasing smaller batch sizes for small portions of the total training tokens, followed by training with the remaining tokens using a large batch size. For instance, GPT-3 \citep{brown2020language} was pretrained by gradually increasing the batch size linearly from a small value (32k tokens) to the full value (3.2M tokens) over the first 4--12 billion tokens of training. Nemotron-4 \citep{parmar2024nemotron} was pretrained with a batch size schedule of batch sizes 384--768--1152 sequences for 2.5\%--2.5\%--95\% of the total number of training tokens. Llama 3 405B \citep{meta2024llama3} was trained using the following batch size schedule: an initial batch size of 4M tokens with a sequence length 4096 tokens for 252M tokens; a batch size of 8M tokens with a sequence length of 8192 tokens for 2.87T tokens; a batch size of 16M tokens for the remainder of a total of about 15T training tokens. Such a batch size recipe is found to be able to stabilize training---few loss spikes were observed and it did not require interventions to correct for model training divergence. Despite potentially improving training efficiency or data parallelism, batch size warmup schedules remain heuristic and their impact on training is difficult to grasp. Another related yet seemingly orthogonal technique is sequence length warmup \citep{li2022stability,jin2023growlength}, which progressively grows the sequence length throughout the pretraining process. Note that the pretraining of Llama 3 405B employs both batch size warmup and sequence length warmup.

\section{Adaptive Batch Size Schedules with 2D Parallelism}

We present the adaptive batch size schedules for data and model parallelism (termed \emph{2D parallelism}), facilitating the scaling of pretraining for models with billions of parameters.

\paragraph{Notation.} 

We define $\setn \coloneqq \{1, \ldots, n\}$ for $n\in\NN^*\coloneqq\NN\setminus\{0\}$. We denote the inner product in $\RR^d$ by $\dotp{\cdot}{\cdot}$ and its induced $L_2$-norm by $\|\cdot\|$, and $\onenorm{\cdot}$ stands for the $L_1$-norm. For a vector $x\in\RR^d$, $[x]_j$ denotes its $j$th coordinate ($j\in\setd$). For a function $f\colon\RR^d\to\RR\cup\{\pm\infty\}$, $\partial_j f$ denotes its partial derivative with respect to its $j$th coordinate for $j\in\setd$. The ceiling function is denoted by $\lceil\cdot\rceil$. The disjoint union of sets $\calS_1, \ldots, \calS_J$ is denoted by $\bigsqcup_{j\in\set{J}} \calS_j$. 

\subsection{Vanilla Adaptive Batch Size Schedules}

We consider the empirical risk minimization problem in which we want to minimize the loss function $\scrL\colon\RR^d\to\RR\cup\{\pm\infty\}$ in the form of a finite-sum objective: 
\begin{equation}
    \minimize_{w\in\RR^d}\ \scrL(w) \coloneqq \frac1n\sumn \ell(w; z_i) ,
\end{equation}
where $\ell\colon\RR^d\times\calZ\to\RR\cup\{\pm\infty\}$ is the individual loss function, and $\calD_n\coloneqq\{z_i\}_{i=1}^n$ is the set of $n$ training samples. If $\ell(\cdot; z)$ is continuously differentiable for any $z\in\calZ$, then the gradient of the loss function and its batch counterpart (i.e., the batch gradient) are given by
\[\nabla\scrL(w) \coloneqq \frac1n\sumn \nabla\ell(w; z_i) \quad\text{and}\quad \nabla\scrL_\calB(w) \coloneqq \frac1b\sum_{i\in\calB} \nabla\ell(w; z_i), \]
where the batch $\calB\subseteq\setn$ is a subset of indices of data points sampled uniformly without replacement, and $b\coloneqq|\calB|$ is the corresponding batch size. We write $\ell_i(w) \coloneqq \ell(w; z_i)$ and $\nabla\ell_i(w) \coloneqq \nabla\ell(w; z_i)$. The batch gradient $\nabla\scrL_\calB$ is used to approximate the full gradient $\nabla\scrL$ as the number of samples $n$ is prohibitively large.

\paragraph{Norm test.}
Falling into the family of adaptive sampling methods, the \emph{norm test} \citep{byrd2012sample} is motivated by measuring the quality of the approximation of the full gradient $\nabla\scrL$ by the batch gradient $\nabla\scrL_\calB$ through the lens of approximation of a descent direction for the loss $\scrL$. If $\ell(\cdot; z)$ is also convex, then $-\nabla\scrL_\calB$ is a descent direction for $\scrL$ at $w\in\RR^d$ if and only if $\dotp{\scrL_\calB(w)}{\scrL(w)}\ge0$, that is, $\scrL_\calB$ and $\scrL$ share the same direction at $w$. It can be shown that the above inner product condition is equivalent to the norm condition: $\norm{\nabla\scrL_\calB(w) - \nabla\scrL(w)} \le \eta\norm{\nabla\scrL(w)}$ for any $\eta\in[0,1)$. This condition cannot be checked directly, since the number of samples $n$ is in billions for LLMs and the full gradient $\nabla\scrL$ is unavailable. Instead, we have to resort to a batch approximation: 
\begin{equation}\label{eqn:approx_norm_test}
    \frac{\onenorm{\Var_{i\in\calB}(\nabla\ell_i(w))}}{b}\cdot\frac{n-b}{n-1}\le \eta^2\norm{\nabla\scrL_\calB(w)}^2,
\end{equation}
where 
\[\Var_{i\in\calB}(\nabla\ell_i(w)) \coloneqq \frac{1}{b-1}\sum_{i\in\calB} \left(\nabla \ell_i(w) - \nabla\scrL_{\calB}(w)\right)^2. \]
The adjustment factor $(n-b)/(n-1)$ is approximated by $1$ as we take $n\to\infty$. Consequently, to ensure that the batch gradient approximates the descent direction of the full objective $\scrL$ well, the \emph{(approximate) norm test} checks the following condition at each iteration $k\in\NN^*$: 
\begin{equation}\label{eqn:approx_norm}
    \frac{\onenorm{\Var_{i\in\calB_k}(\nabla \ell_i(w_k))}}{b_k} = \frac{1}{b_k(b_k - 1)}\sum_{i\in\calB_k}\left[\|\nabla \ell_i(w_k) - \nabla\scrL_{\calB_k}(w_k)\|^2 \right] \le \eta^2 \norm{\nabla\scrL_{\calB_k}(w_k)}^2, 
\end{equation}
and increases the next batch size $b_{k+1}$ if the above inequality is not satisfied, using 
\[
b_{k+1} = \left\lceil \frac{\onenorm{\Var_{i\in\calB_k}(\nabla \ell_i(w_k))}}{\eta^2\|\nabla \scrL_{\calB_k}(w_k)\|^2} \right\rceil. 
\]
The condition can be viewed as an approximation of the following \emph{exact variance norm test} in the stochastic setting: 
\begin{equation}\label{eqn:exact_var_norm_test}
    \Ex_k\left[\|\nabla \scrL_{\calB_k}(w_k) - \nabla\scrL(w_k)\|^2 \right] \le \eta^2 \norm{\nabla\scrL(w_k)}^2, 
\end{equation}
i.e., the motivating norm condition holds in expectation. 
Here $\Ex_k \coloneqq \Ex[\cdot\mid\calF_k]$ denotes the conditional expectation with respect to the $\sigma$-algebra up to the current batch at iteration $k$, i.e., $\calF_k\coloneqq\sigma(\{w_0, \calB_0, \calB_1, \ldots, \calB_{k-1}\})$. After the next batch size is determined, the training loop continues with an optimizer step. The test implicitly makes a heuristic assumption that the next batch of size $b_{k+1}$ will satisfy the approximate norm test at the current iterate $w_k$, but this is never checked to streamline the training loop.

\subsection{Adaptive Batch Size Schedules with Data Parallelism}

To allow training with large batch sizes with parallelized computations, a data-parallel extension of the \emph{norm test}, which is referred to as \DDPNorm, can be developed and can be implemented based on PyTorch \DDP. A special treatment of the norm test with data parallelism is necessary since data samples now reside in different workers, but we need to compute the mean and the variance of all the per-sample gradients in the norm test. 

Specifically, at each iteration $k$, the global batch $\calB_k$ is split across $J$ workers with minibatches $(\calB_{k,j})_{j\in\setJ}$ of equal size $b_{k,J}$ such that the global batch is the disjoint union of all minibatches, i.e., $\calB_k = \bigsqcup_{j\in\setJ}\calB_{k,j}$. Notice that at each worker $j\in\setJ$, the minibatch gradient can be computed by $\nabla\scrL_{\calB_{k,j}}(w_k) = \sfrac{1}{b_{k,J}}\sum_{i\in\calB_{k,j}}\nabla\ell_i(w_k)$. Since the minibatches have equal size and are disjoint, applying the law of total expectation, the global batch gradient is equal to $\nabla\scrL_{\calB_k}(w_k) = \sfrac1J\sumJ \nabla\scrL_{\calB_{k,j}}(w_k)$. 
Note that the averages across workers are computed using \emph{all-reduce} operations in PyTorch \DDP. When minibatch sizes exceed the maximum memory of the workers, the technique of gradient accumulation is applied to simulate larger global batch sizes. 

It is worth noting that efficiently implementing the approximate norm test \eqref{eqn:approx_norm} in deep learning libraries such as PyTorch \citep{paszke2019pytorch} is highly nontrivial, since per-sample gradients $\nabla\ell_i(w_k)$ are unavailable in the backward step of a standard training loop, but only the batch gradient $\nabla\scrL_{\calB_k}(w_k)$ under a single-device setting or the minibatch gradient $\nabla\scrL_{\calB_{k,j}}(w_k)$ at each worker $j$ under PyTorch \DDP. If we were to implement the native approximate norm test \eqref{eqn:approx_norm}, we would have had to compute per-sample gradients in parallel using vectorized mappings and based on a deep copy of the model, leading to undesirable memory and computational overheads. 
Thus, in practical implementation under data parallelism, instead of the approximate norm test \eqref{eqn:approx_norm}, we propose to make use of the minibatch gradients of the workers to construct an estimator for the gradient variance 
\[\hat\Var_{i\in\calB_k}(\nabla\ell_i(w_k)) \coloneqq \frac1J\sum_{j\in\setJ}\left(\nabla\scrL_{\calB_{k,j}}(w_k) - \nabla\scrL_{\calB_k}(w_k)\right)^2, \]
leading to the following more efficient implementation: 
\begin{equation}\label{eqn:approx_norm_alt}
    \frac{1}{b_k}\cdot\frac1J\sum_{j\in\setJ}\left[\left\|\nabla\scrL_{\calB_{k,j}}(w_k) - \nabla\scrL_{\calB_k}(w_k)\right\|^2 \right] \le \eta^2 \norm{\nabla\scrL_{\calB_k}(w_k)}^2. 
\end{equation}
From now on, we refer the above alternative test as \DDPNorm. This implementation is much more computationally efficient since the minibatch gradients $\nabla\scrL_{\calB_{k,j}}(w_k)$ are already available at each worker and the global batch gradient $\nabla\scrL_{\calB_k}(w_k)$ can be computed using \emph{all-reduce} operations. Note however that this implementation requires an additional \emph{all-reduce} operation every time to compute the quantity on the left hand side of \eqref{eqn:approx_norm_alt} and additional memory to store it.

\subsection{Adaptive Batch Size Schedules with 2D Parallelism via PyTorch \FSDP}

To enable the training of models with more than billions of parameters, model-parallel training presents a more sophisticated paradigm of parallelism. It shards the parameters of models and allocates different shards to different workers. In essence, PyTorch \FSDP \citep{zhao2023pytorch}, which shares similarities with \ZeRO-3 \citep{rajbhandari2020zero,ren2021zero} in DeepSpeed \citep{rasley2020deepspeed}, operates by substituting the \emph{all-reduce} operation in PyTorch \DDP with \emph{all-gather} and \emph{reduce-scatter} operations.

For the purpose of mathematical illustration, we focus particularly on the tensor parallelism aspect of model parallelism. Coupled with data parallelism, it is established that each worker $j$ possesses its own set of sharded parameters $\calW_j$, $j\in\set{J}$, such that all the model parameters are denoted by $w_k = (w_{k,j})_{j\in\set{J}}$. Here, the sharded parameters on worker $j$ are represented by $w_{k,j}\in\calW_j$. Consequently, to compute the microbatch gradient at worker $j$, the gradients of all parameter shards must be resharded to obtain 
$\nabla\scrL_{\calB_{k,j}}(w_k) = (\nabla\scrL_{\calB_{k,j}}(w_{k,1}), \ldots, \nabla\scrL_{\calB_{k,j}}(w_{k,J}))$, which can be efficiently implemented using the API of PyTorch \FSDP. The implementation of \DDPNorm based on PyTorch \FSDP is referred to as \FSDPNorm.

\section{Convergence Analysis}
Complementary to the convergence results of the norm test for \SGD \citep{de2016big,de2017automated} and \AdaGrad \citep{lau2024adadagrad}, we derive convergence guarantees for \Adam, acknowledging its prevalence in training deep neural networks for both computer vision and, more recently, language models. \Adam \citep{kingma2015} employs the following update formula (with bias corrections for $m_k$ and $v_k$ dropped):
\begin{equation}\label{eqn:Adam}
    (\forall k\in\NN^*) \quad m_k = \beta_1 m_{k-1} + (1-\beta_1)g_k, \quad
        v_k = \beta_2 v_{k-1} + (1-\beta_2) g_k^2, \quad
        w_{k+1} = w_k - \alpha m_k \odot v_k^{-\half},
\end{equation}
where $g_k\coloneqq\nabla \scrL_{\calB_k}(w_k)$, $\alpha>0$ is a constant learning rate, $(m_k)_{k\in\NN^*}$ and $(v_k)_{k\in\NN^*}$ are the sequences of exponential weighted moving averages of the first two moments of the batch gradients respectively, $(\beta_1, \beta_2)\in\RPP^2$ are weighting parameters, $\odot$ denotes the Hadamard product, and the power operations are performed coordinate-wise. We omit the bias corrections of $m_k$ and $v_k$ to simplify the analysis, but note that it can be easily extended to incorporate bias corrections. We also consider the more challenging scenario where $v_k^{-\half}$ instead of $(v_k^{\half} + \varepsilon)^{-1}$ is used in the update, since the denominator of the adaptive step sizes is no longer lower bounded away from $0$. In our analysis, we invoke the following assumptions. 
\begin{assumption}[$L$-Lipschitz smoothness]
    \label{ass:L-smooth}
    The loss function $\scrL$ is $L$-Lipschitz smooth ($L>0$): for any $(u,v)\in\RR^d\times\RR^d$, we have 
    $\norm{\nabla\scrL(u) - \nabla\scrL(v)} \le L\norm{u-v}$. 
\end{assumption}

Similarly to the analysis for \AdaGrad \citep{lau2024adadagrad}, we also require a coordinate-wise version of the (exact variance) norm test to hold due to the use of adaptive step size. 
\begin{proposition}
    The \emph{coordinate-wise} (exact variance) norm test with constant $\eta\in(0,1)$ ensures that, for every iteration $k\in\setK$, the coordinate-wise batch gradient $\partial_i \scrL_{\calB_k}(w_k)$ satisfies the following \emph{coordinate-wise expected strong growth} (E-SG) condition: for all $i\in\setd$, we have
    \[
    \Ex_k[(\partial_i \scrL_{\calB_k}(w_k))^2] \le (1+\eta^2)(\partial_i \scrL(w_k))^2. 
    \] 
\end{proposition}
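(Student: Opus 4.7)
The proof is a direct bias–variance decomposition, combined with the unbiasedness of the batch gradient under uniform (without-replacement) sampling. The coordinate-wise exact variance norm test, by analogy with \eqref{eqn:exact_var_norm_test}, asserts that for each $i\in\setd$,
\[
\Ex_k\bigl[(\partial_i \scrL_{\calB_k}(w_k) - \partial_i \scrL(w_k))^2\bigr] \le \eta^2\,(\partial_i \scrL(w_k))^2.
\]
The plan is to start from this hypothesis, expand the square on the right-hand side of the target inequality, and control the cross term using unbiasedness.

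\textbf{Step 1.} Since $\calB_k$ is drawn uniformly without replacement from $\setn$ and is conditionally independent of $w_k$ (i.e., $w_k$ is $\calF_k$-measurable while $\calB_k$ averages an unbiased sample), one has $\Ex_k[\nabla\scrL_{\calB_k}(w_k)] = \nabla\scrL(w_k)$, and hence coordinate-wise, $\Ex_k[\partial_i\scrL_{\calB_k}(w_k)] = \partial_i\scrL(w_k)$ for every $i\in\setd$.

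\textbf{Step 2.} Write the second moment as a variance plus a squared mean:
\[
\Ex_k\bigl[(\partial_i \scrL_{\calB_k}(w_k))^2\bigr]
= \Ex_k\bigl[(\partial_i \scrL_{\calB_k}(w_k) - \partial_i \scrL(w_k))^2\bigr] + (\partial_i \scrL(w_k))^2,
\]
where the cross term vanishes by Step 1. Applying the coordinate-wise norm test to the first term on the right-hand side yields the bound $(1+\eta^2)(\partial_i\scrL(w_k))^2$, as required.

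\textbf{Obstacles.} There is essentially no obstacle here: the only subtlety worth flagging is to verify the unbiasedness carefully under sampling \emph{without} replacement (the same adjustment factor $(n-b)/(n-1)$ appearing in \eqref{eqn:approx_norm_test} is absorbed into the variance, not the mean), and to state explicitly that the coordinate-wise norm test is the correct hypothesis being invoked, so that the result is a pointwise-in-$i$ consequence of a pointwise-in-$i$ assumption rather than derived from the aggregated norm test \eqref{eqn:exact_var_norm_test}. Once these points are made, the argument reduces to the identity $\Ex[X^2] = \Var(X) + (\Ex X)^2$ applied to $X = \partial_i\scrL_{\calB_k}(w_k)$.
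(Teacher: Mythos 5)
Your proof is correct and matches the paper's own (implicit) argument: the paper simply asserts that the coordinate-wise norm test ``implies'' the E-SG bound, and the mechanism behind that implication is exactly the bias--variance identity $\Ex_k[X^2]=\Ex_k[(X-\Ex_k X)^2]+(\Ex_k X)^2$ applied coordinate-wise together with unbiasedness of the batch gradient. Your explicit handling of the cross term and the remark about the without-replacement adjustment factor are sound and add nothing beyond what the paper intends.
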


Following closely a similar analysis to that in \citep{wang2023closing}, we provide the following convergence results of the norm test for \Adam. 
\begin{theorem}
    \label{thm:adam}
    Suppose that \Cref{ass:L-smooth} holds. Let $(w_k)_{k\in\NN^*}$ be the \Adam iterates generated by \eqref{eqn:Adam}, where the batch size $b_k\coloneqq|\calB_k|$ is chosen such that the coordinate-wise (exact variance) norm test with constant $\eta\in\left(0,1\right)$ is satisfied at each iteration $k\in\NN^*$. Then, if $0<\beta_1\le\sqrt{\beta_2} - 8(1+\eta^2)(1-\beta_2)/\beta_2^2$ and $\beta_2\in(0,1)$, we have     
    $\sumK\Ex[\norm{\nabla\scrL(w_k)}] \le \tilde\scrO\left(K \right)$, where $\tilde\scrO$ hides any logarithmic factors. 
\end{theorem}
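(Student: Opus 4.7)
The plan is to adapt the recent Adam convergence template of \citet{wang2023closing} to the adaptive-sampling setting, substituting their bounded-variance hypothesis with the coordinate-wise expected strong growth (E-SG) condition furnished by the norm test. First, I would apply the descent lemma from \Cref{ass:L-smooth}, yielding
\[
\scrL(w_{k+1}) \le \scrL(w_k) - \alpha \dotp{\nabla\scrL(w_k)}{m_k\odot v_k^{-1/2}} + \tfrac{L\alpha^2}{2}\norm{m_k\odot v_k^{-1/2}}^2 .
\]
Because $v_k$ contains $g_k^2$, it is correlated with both $m_k$ and with $\nabla\scrL(w_k)$, so the inner product term cannot be handled by directly taking $\Ex_k[\cdot]$. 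The standard remedy is to introduce a decoupled surrogate $\tilde v_k$ obtained by replacing $g_k^2$ inside $v_k$ with its conditional expectation $\Ex_k[g_k^2]$, and then to split $v_k^{-1/2} = \tilde v_k^{-1/2} + (v_k^{-1/2} - \tilde v_k^{-1/2})$, bounding the residual coordinatewise by a Taylor-type estimate $\bigl|[v_k]_i^{-1/2} - [\tilde v_k]_i^{-1/2}\bigr| \le \tfrac12 [\tilde v_k]_i^{-3/2}(1-\beta_2)\bigl|[g_k]_i^2 - \Ex_k[g_k]_i^2\bigr|$, which is absorbed via E-SG.

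Next, I would unroll $m_k = (1-\beta_1)\sum_{j=1}^k \beta_1^{k-j} g_j$ and apply E-SG coordinatewise to replace each $g_j$ by $\nabla\scrL(w_j)$ modulo a quadratic residue bounded by $(1+\eta^2)(\partial_i \scrL(w_j))^2$. The drift terms $\nabla\scrL(w_j) - \nabla\scrL(w_k)$ produced by this substitution are controlled via $L$-smoothness and the Adam update, contributing a cumulative $\sumK \norm{m_k \odot v_k^{-1/2}}^2$ term. The second-moment residue itself is then handled coordinatewise by the well-known telescoping identity
\[
\sumK (1-\beta_2)\frac{[g_k]_i^2}{[v_k]_i} \le \log\frac{[v_K]_i}{[v_0]_i},
\]
which, after summing over the $d$ coordinates and taking expectation, supplies the logarithmic factors hidden in the $\tilde\scrO$ notation.

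Summing the per-step inequalities over $k\in\setK$ and taking total expectation, the left-hand side telescopes to $\scrL(w_1) - \Ex[\scrL(w_{K+1})]$, while the right-hand side decomposes into a signal term $\sumK \Ex\dotp{\nabla\scrL(w_k)}{\nabla\scrL(w_k)\odot \tilde v_k^{-1/2}}$, an E-SG-scaled momentum/noise error $\sumK \Ex\norm{m_k\odot v_k^{-1/2}}^2$, and the logarithmic telescoping residue above. The parameter constraint $\beta_1 \le \sqrt{\beta_2} - 8(1+\eta^2)(1-\beta_2)/\beta_2^2$ is calibrated precisely so that the signal coefficient strictly dominates the noise coefficient: the specific factor $8(1+\eta^2)(1-\beta_2)/\beta_2^2$ emerges from combining Young's inequality on the momentum cross-terms with the $(1+\eta^2)$ factor from E-SG and a $1/\beta_2^2$ lower bound relating $\tilde v_k$ to $v_k$. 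This simultaneous absorption is the main obstacle, since the three sources of error (momentum bias, variance surrogate, E-SG slack) interact multiplicatively and must be tracked jointly to leave a strictly positive coefficient on the signal. Once the absorption succeeds, a Cauchy--Schwarz step together with an almost-sure upper bound $[\tilde v_k]_i \le \tilde\scrO(1)$, itself established inductively from the E-SG-controlled growth of $v_k$, converts the signal into $\sumK \Ex \norm{\nabla\scrL(w_k)}$, delivering the claimed $\tilde\scrO(K)$ bound.
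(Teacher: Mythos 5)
Your high-level plan correctly identifies the template (\citet{wang2023closing}) and the role of the E-SG condition as a replacement for their affine-variance hypothesis; you also correctly locate the decoupling device $\tilde v_k$, the unrolling of momentum, and the telescoping/log bound on $\sum_k (1-\beta_2)[g_k]_i^2/[v_k]_i$. Two things diverge from the paper, one minor and one a genuine gap.

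\textbf{Different route on the momentum term.} The paper does not take a descent lemma on $\scrL(w_{k+1})$ directly and then unroll $m_k$. Instead it works with the standard momentum-killing auxiliary sequence $u_k = \frac{w_k - (\beta_1/\sqrt{\beta_2}) w_{k-1}}{1-\beta_1/\sqrt{\beta_2}}$ and derives the descent inequality on $\scrL(u_{k+1})$ (\Cref{lemma:descent}). That transformation converts the momentum update into an (approximately) SGD-like update plus small residuals, which is what makes the per-step bookkeeping tractable and is where the constant $8(1+\eta^2)(1-\beta_2)/\beta_2^2$ is ultimately calibrated. Your ``unroll $m_k$ and control drift via smoothness'' alternative is in principle workable but produces a double sum over $(j,k)$ whose absorption into the signal term is considerably messier, and you do not actually carry out that absorption; the paper's route is cleaner.

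\textbf{Genuine gap in the final step.} You finish by invoking ``an almost-sure upper bound $[\tilde v_k]_i \le \tilde\scrO(1)$, itself established inductively from the E-SG-controlled growth of $v_k$.'' This cannot be established under the hypotheses. E-SG only controls $\Ex_k[(\partial_i\scrL_{\calB_k}(w_k))^2]$ relative to $(\partial_i\scrL(w_k))^2$; since the true gradient is not assumed bounded, $v_k$ (and hence $\tilde v_k$) has no a priori almost-sure bound, and inducing one from E-SG would be circular because controlling $\|\nabla\scrL(w_k)\|$ is precisely the object of the theorem. Handling exactly this issue---affine (here, purely multiplicative) variance without bounded gradients---is the central technical difficulty that the Wang-et-al.\ template is designed to overcome. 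The paper's replacement for your claimed bound is \Cref{lemma:bound3}, which bounds $\sum_{k=1}^{K+1}\sum_{i=1}^d \Ex[\tilde v_{k,i}^{1/2}]$ (not an almost-sure quantity) by $\tilde\scrO(K)$ via a \emph{divide-and-conquer} argument splitting on whether $|\scrG_{k,i}| \ge \sigma/\tau$ or $|\scrG_{k,i}| \le \sigma/\tau$; the Cauchy--Schwarz step then pairs this in-expectation bound against $\sum_k \Ex[\scrG_{k,i}^2/\sqrt{\tilde v_{k,i}}]$ coming from the descent lemma. Without this lemma your argument does not close.
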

The full statement of this theorem and its proofs, as well as more in-depth related discussions, are deferred to \Cref{sec:proofs}. 
The convergence results presented do not account for the decoupled weight decay in \AdamW \citep{loshchilov2019decoupled}, which is more commonly used as an optimizer for language model pretraining. Furthermore, considerations such as learning rate schedules and gradient clipping are not included in these findings. Extending the above convergence guarantees to these settings is highly challenging and nontrivial and is left for future work.

\section{Numerical Experiments}
\label{sec:expt}

To showcase the versatility and scalability of \FSDPNorm, we conduct experiments with various families of decoder-only autoregressive language models at with different sizes and pretraining datasets. These include MicroLlama 300M \citep{microllama2024}, TinyLlama 1.1B \citep{zhang2024tinyllama} and OpenLlama 3B \citep{openlm2023openllama} on the C4 dataset \citep{raffel2020exploring}. The C4 dataset are tokenized using the Llama 2 tokenizer \citep{touvron2023llama2} with a vocabulary size of 32,000. Experiments are conducted on workstations equipped with 4 NVIDIA L40S GPUs (MicroLlama) and 4 NVIDIA A100-SXM 80GB GPUs (TinyLlama and OpenLlama). The training of the latter two models only feasible with PyTorch \FSDP but not with PyTorch \DDP using such hardware configurations, even with mixed-precision training (\texttt{bfloat16} is used). Our implementation utilizes the PyTorch \FSDP API in PyTorch 2.6.1 and is simplified through Lightning Fabric of Lightning 2.4 \citep{falcon2019pytorch}. For the ease of training language models, we also use LitGPT 0.5.3 \citep{litgpt2023}. Open-source implementation of \DDPNorm and \FSDPNorm is available at \url{https://github.com/timlautk/adaptive-batch-fsdp}. 

\paragraph{Training Specifications.}
Adhering to the pretraining configurations of open-source LLMs such as TinyLlama \citep{zhang2024tinyllama} and OLMo \citep{groeneveld2024olmo}, our training specifications include a linear warmup followed by a cosine decay learning rate schedule, and the \AdamW optimizer with weight decay and gradient clipping. The adaptive batch size schedule is set to a maximum global batch size, above which the norm test is no longer performed, opting for fixed interval testing over step-by-step (a test interval 1 is used, but longer interval entails reduced overheads brought by the test). Efficiency dictates using the test in its original form rather than its coordinate-wise variant, despite convergence guarantees. Given that batch sizes increase to the maximum possible values in the early stages, we only pretrain our models for a number of samples that are sufficient to display the behavior of our method, treating these experiments mainly as proofs of concept. Detailed configurations are provided in \Cref{sec:expt_details}.

\subsection{MicroLlama 300M}
\label{subsec:microllama}
We first pretrain MicroLlama with 300M trainable parameters on the C4 dataset \citep{raffel2020exploring} under the same sets of other hyperparameters in order to better understand the effect of adaptive batch sizes. We compare with various constant batch size baselines $b_k\in\{2048,4096,8192\}$ and a stagewise batch size schedule 2048-4096-8192 for 2.5-2.5-95\% of training tokens mimicking a popular batch size warmup for pretraining LLMs, and plot the results in \Cref{fig:microllama}. We apply \DDPNorm for this relatively small model to demonstrate the applicability of the proposed schedules with PyTorch \DDP. In \Cref{table:microllama}, we report the total number of gradient steps (step), average batch size (bsz.), wall-clock time (time; in hours), best training loss (loss) and best validation loss (val loss; estimated by 100 iterations).

\begin{figure}[h!]
    \centering
    \includegraphics[width=\textwidth]{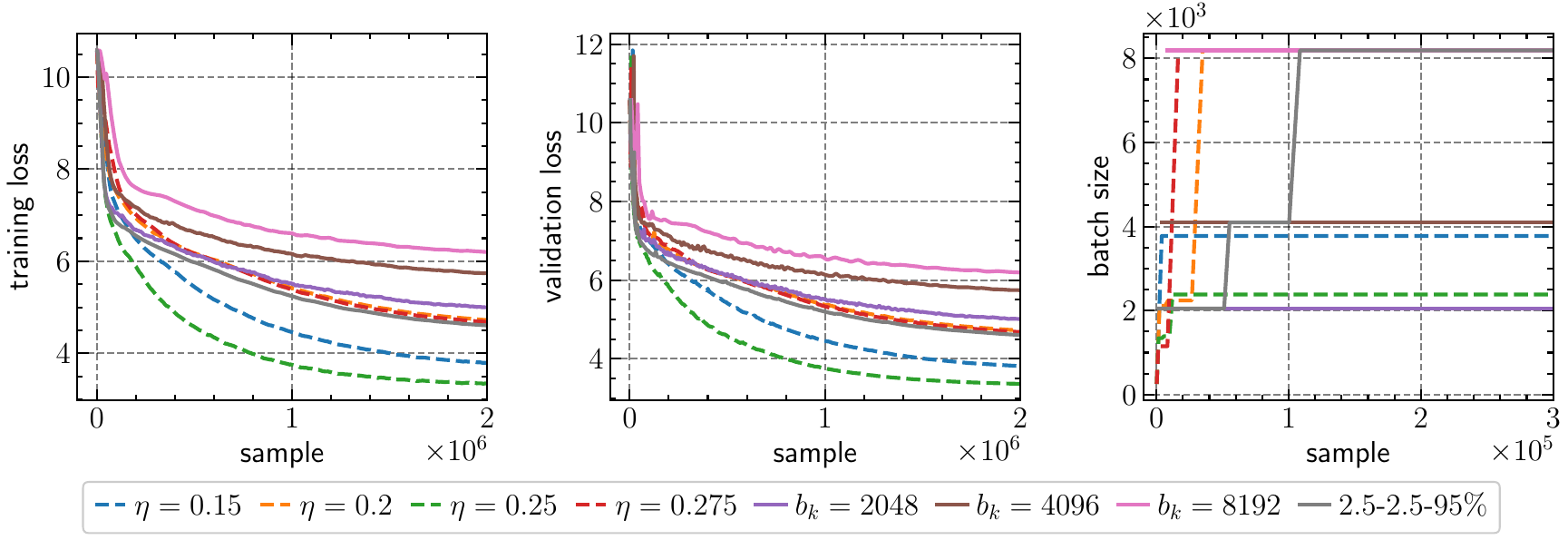}
    \caption{Training loss, validation loss and batch size schedule for MicroLlama 300M}
    \label{fig:microllama}
\end{figure}

We observe from \Cref{fig:microllama} that with $ \eta = 0.2 $ or $ \eta = 0.275 $, our proposed \DDPNorm outperforms the constant batch size baselines by a large margin in terms of validation loss. Specifically, using the same number of training samples, from \Cref{table:microllama}, our method achieves lower validation losses when using similar number of steps ($\eta=0.2$ versus $b_k=8192$), when we use the number of steps as the criterion of measuring training efficiency. Our proposed schedule with $\eta=0.2$ performs slightly worse than the stagewise batch size schedule, but it is expected since the latter has a smaller averaged batch size and takes a larger number of training steps. It is also worth noting that the design of the stagewise schedule is completely heuristic and might require lots of tuning, e.g., the number of stages, values of batch sizes and their ratios.  

We also observe that our method uses smaller batches at early stages and larger batches at later stages of training (e.g., $\eta\in\{0.2, 0.275\}$). This behavior has greater benefits regarding training efficiency because a larger batch size at each step means fewer number of required steps for the whole training process. On the other hand, our method greatly mitigates the side-effect of large-batch training---higher validation loss at the end of training---by starting from a small batch size and adaptively increasing it. Thus, our method enjoys both the good generalization performance of small batches and the high training efficiency of large batches. 
More importantly, our method is able to automatically increase batch sizes whenever it is necessary, to values that are completely adaptive to the training dynamics. Taking the adaptive batch size schedules in \Cref{fig:microllama} as an example, it is almost impossible to hand-craft similar schemes.

\begin{table}[h!]
    \centering	
    \begin{tabular}{crrrrr}
        \toprule
        scheme & steps & bsz. & time & loss & val loss \\
        \midrule
        $\eta=0.15$ & 531 & 3770 & 9.31 & 3.764 & 3.811 \\
        $\eta=0.2$ & 254 & 7878 & 8.85 & 4.699 & 4.720 \\
        $\eta=0.25$ & 843 & 2373 & 10.30 & 3.313 & 3.361 \\
        $\eta=0.275$ & 252 & 7965 & 8.84 & 4.669 & 4.677 \\   
        $b_k=2048$ & 977 & 2048 & 11.18 & 4.976 & 5.005 \\
        $b_k=4096$ & 489 & 4096 & 9.66 & 5.722 & 5.741 \\
        $b_k=8192$ & 245 & 8192 & 8.48 & 6.183 & 6.192 \\
        2.5-2.5-95\% & 269 & 7439 & 8.78 & 4.594 & 4.604 \\
        \bottomrule
    \end{tabular}  
    \caption{Results of MicroLlama 300M}
    \label{table:microllama}
\end{table}

\subsection{TinyLlama 1.1B}
  
\begin{figure}[h!]
    \centering
    \includegraphics[width=\textwidth]{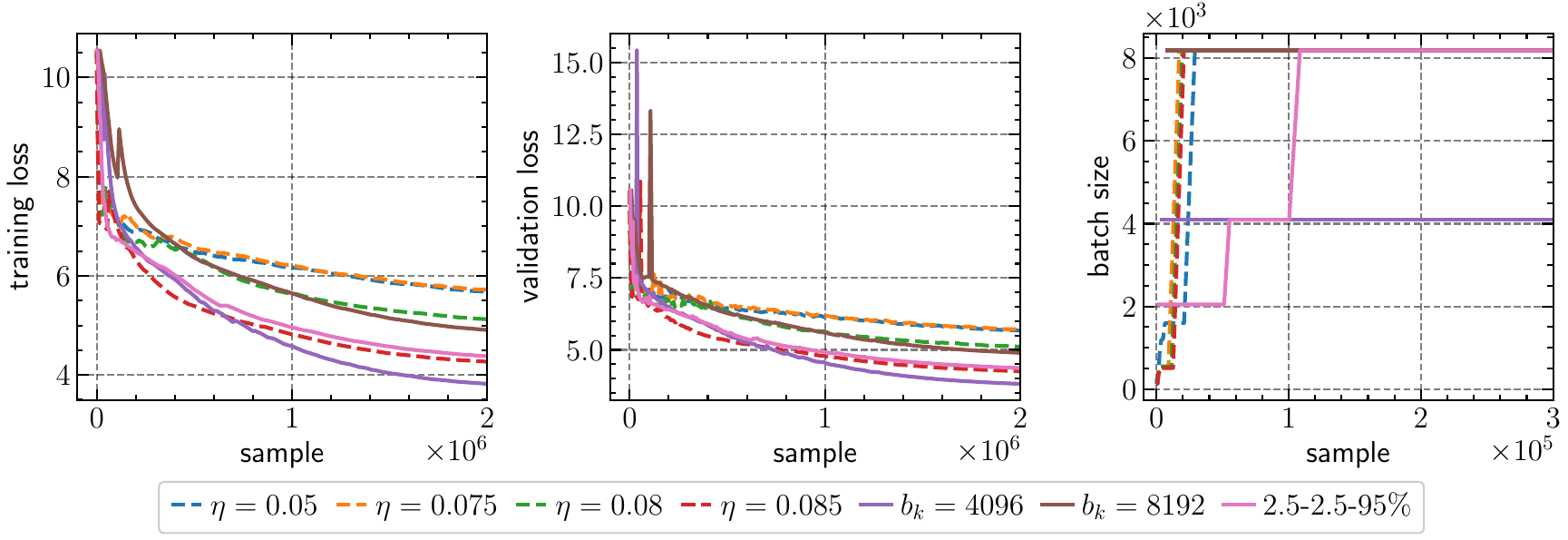}
    \caption{Training loss, validation loss and batch size schedule for TinyLlama 1.1B}
    \label{fig:tinyllama}
\end{figure}

We also pretrain TinyLlama 1.1B on the C4 dataset, which necessitates the use of PyTorch \FSDP and \FSDPNorm. 
From \Cref{fig:tinyllama,table:tinyllama}, similar conclusions can be made. We observe that our proposed \FSDPNorm effectively narrows the generalization gap between large and small batches, compared with constant batch sizes and with stagewise batch size schedule baselines. 
Specifically, our method facilitates the adoption of larger batch sizes of 8192 during the later stages of training. 
For instance, our method with $\eta=0.085$ achieves an averaged batch size of 7312, yet it achieves validation loss closer to that of $b_k=4096$, compared to $b_k=8192$. Our proposed method is also able to reduce the magnitude of potential loss spikes which are obvious in using constant batch sizes.

\begin{table}[h!]
    \centering
    \begin{tabular}[H]{crrrrr}
        \toprule
        scheme & steps  & bsz. & time & loss & val loss \\
        \midrule            
        $\eta=0.05$ & 261 & 7676 & 32.53 & 5.663 & 5.671 \\
        $\eta=0.075$ & 267 & 7521 & 32.67 & 5.705 & 5.704 \\
        $\eta=0.08$ & 270 & 7415 & 32.61 & 5.109 & 5.113 \\
        $\eta=0.085$ & 274 & 7312 & 32.83 & 4.257 & 4.256 \\
        $b_k=4096$ & 489 & 4096 & 34.48 & 3.814 & 3.817 \\
        $b_k=8192$ & 245 & 8192 & 32.41 & 4.895 & 4.893 \\
        2.5-2.5-95\% & 269 & 7439 & 32.80 & 4.368 & 4.367 \\
        \bottomrule
    \end{tabular}
    \caption{Results of TinyLlama 1.1B}
    \label{table:tinyllama}
\end{table}

\subsection{OpenLlama 3B}

We finally pretrain OpenLlama 3B on the C4 dataset, where a shorter sequence length of 512 instead of 2048 is used due to constraint on compute resources. Again, we observe similar phenomena to those of the smaller models, as revealed in \Cref{fig:openllama,table:openllama}. Specifically, with $\eta=0.15$, the proposed approach requires slightly longer training time and larger number of training steps than the constant batch size 8192, while achieving a lower validation loss. While using a constant batch size 4096 achieves an even lower validation loss, it requires substantially more training steps and more than one hour of additional training time.

\begin{figure}[t]
    \centering
    \vspace*{-2.5mm}
    \includegraphics[width=\textwidth]{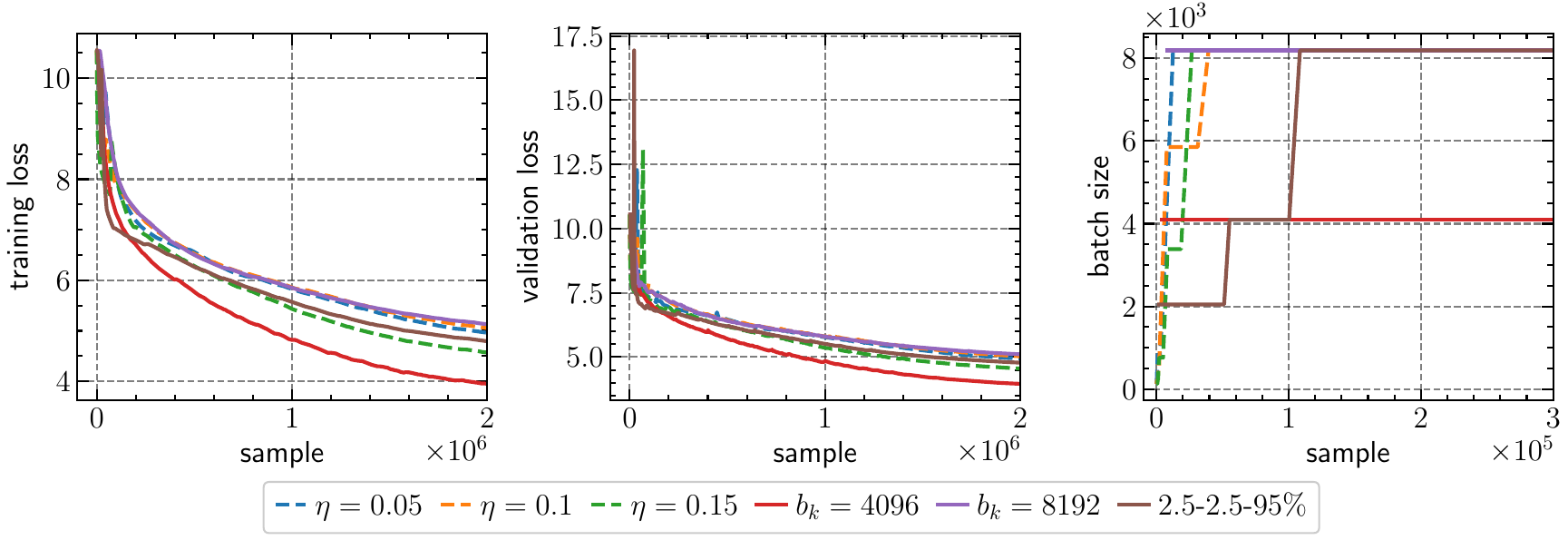}
    \caption{Training loss, validation loss and batch size schedule for OpenLlama 3B}
    \label{fig:openllama}
\end{figure}

\begin{table}[t]
    \centering
    \begin{tabular}[H]{crrrrr}
        \toprule
        scheme & steps & bsz. & time & loss & val loss \\
        \midrule            
        $\eta=0.05$ & 249 & 8045 & 19.54 & 4.943 & 4.935 \\
        $\eta=0.1$ & 253 & 7926 & 19.73 & 5.026 & 5.031 \\
        $\eta=0.15$ & 259 & 7726 & 19.59 & 4.549 & 4.554 \\
        $b_k=4096$ & 489 & 4096 & 20.75 & 3.934 & 3.956 \\
        $b_k=8192$ & 245 & 8192 & 19.53 & 5.113 & 5.104 \\
        2.5-2.5-95\% & 269 & 7439 & 19.59 & 4.776 & 4.781 \\
        \bottomrule
    \end{tabular}
    \caption{Results of OpenLlama 3B}
    \label{table:openllama}
     \vspace*{-2.5mm}
\end{table}

\subsection{Further Discussions of Experimental Results}

\paragraph{The effect of $\eta$.}
The hyperparameter $\eta$ in the adaptive batch size schedules has the effect of controlling the probability of \emph{obtaining a descent direction} and hence \emph{increasing the batch size}. Obviously, choosing a right value of $\eta$ is vital for our method to succeed. 
Across all three sets of experiments of different model scales, we found that larger values of $\eta$ generally lead to more gradual batch size increments, but smaller values would allow full utilization of available compute resources at earlier stages of training but might defeat the prupose of adaptive batch sizes. Note that $\eta$ also varies with the base learning rate $\alpha$ and the quality of the training datasets. 
In the series of works of adaptive sampling methods \citep{byrd2012sample,bollapragada2018adaptive,bollapragada2023newton}, there are in-depth discussions on choosing the learning rate via some line-search procedures, which are however usually infeasible when training large deep neural networks. 

\paragraph{Scaling law of critical batch size.}
We conjecture that there are more general scaling laws of the critical batch size (see e.g., \citep{kaplan2020scaling,su2024unraveling,sclocchi2024different,zhang2024does}) in relation to $\eta$ which controls gradient approximation quality and the scale of gradient noise. For most choices of $\eta$ in the three sets of experiments, we choose $\eta$ small enough so that global batch sizes increase rapidly and reach the maximum possible values. However, in \Cref{fig:microllama}, when $\eta=0.15$, the final batch size is around 3800, which might be the critical batch size at this value of $\eta$. It is thus crucial to understand the notion of critical batch sizes through the lens of gradient approximation quality and we leave this for future work.

\section{Concluding Remarks}
We create an efficient PyTorch \FSDP implementation of the norm test for large-scale distributed training, focusing on hardware use and ease of development. Our implementation shows that adaptive batch size schedules can pretrain Llama 2 language models with up to 3 billion parameters using few GPUs \citep{touvron2023llama2}. Furthermore, we provide convergence guarantees of the norm test for \Adam, suggesting that our proposed adaptive batch size schedules are not only practically feasible, but also theoretically principled. Due to its generality, versatility, and scalability, we foresee extensive use of the adaptive batch size schedules in pretraining large transformer models like vision transformers (\ViT) \citep{dosovitskiy2021an} and autoregressive image models (\Aim) \citep{el2024scalable}. We emphasize our attention on a PyTorch \FSDP approach due to its integration with PyTorch. However, a more advanced implementation of the adaptive batch size schedules, using a new version of PyTorch \FSDP (\texttt{FSDP2}) and tensor parallelism via PyTorch \texttt{DTensor} (Distributed Tensor), as well as availing of stronger computational hardware, will significantly enhance the scalability of the method for training models exceeding 7B parameters with 2D, 3D or even 4D parallelism. For further exploration, we refer readers to the \texttt{torchtitan} \citep{liang2024torchtitan} and \texttt{lingua} \citep{videau2024lingua} repositories. 
Furthermore, while our current implementation is based on PyTorch \FSDP, but is readily extendable to other deep learning frameworks such as JAX \citep{jax2018github} with \FSDP and/or GShard \citep{lepikhin2021gshard}. 

\vspace*{-2.5mm}
\paragraph{Limitations.}
In this work, we are primarily concerned with model generalization performance measured by validation loss without any evaluation on downstream benchmarks. The main reason for this is that we did not fully pretrain the models for sufficient number of tokens, implying that these models will not be competitive on downstream benchmarks. However, we expect that models fully pretrained with our proposed schedules will achieve very competitive performance on the evaluation of downstream benchmarks. We also remark that we can also incorporate other paradigms of parallelism such as pipeline and context parallelism with our proposed scheme, leading to 4D parallelism (data, tensor, pipeline, context parallel) for large-scale pretraining. While not supported in our current implementation, this can be achieved using the recent library \texttt{picotron} \citep{zhao2025picotron} or the more sophisticated library Megatron-LM \citep{shoeybi2019megatron}. We leave this implementation for future work.

\subsection*{Acknowledgments}
Tim Tsz-Kit Lau would like to thank Zhihan Zhou for helpful discussion on the experimental setup. 
The research of Han Liu is supported by NIH R01LM01372201, NSF RI 1840857, NSF TRIPOD 1740735, NSF DMS1454377-CAREER, NSF IIS 1546482, along with an Alfred P.~Sloan Fellowship. 
The research of Mladen Kolar is supported in part by NSF ECCS-2216912. 
This research is supported in part through the computational resources and staff contributions provided for the Quest high performance computing facility at Northwestern University which is jointly supported by the Office of the Provost, the Office for Research, and Northwestern University Information Technology. 
This research is also supported in part through the computational resources and staff contributions provided for the Data Science Institute at the University of Chicago, through its AI + Science Research Funding Initiatives.

\addcontentsline{toc}{section}{\protect\textbf{References}}
\bibliographystyle{plainnat}
\bibliography{ref}

\newpage
\appendix
\addcontentsline{toc}{section}{\protect\textbf{Appendix}}
\begin{center}
    {\LARGE \textsc{Appendix}}
\end{center}	
\numberwithin{equation}{section}
\numberwithin{theorem}{section}
\numberwithin{proposition}{section}
\numberwithin{lemma}{section}
\numberwithin{definition}{section}
\numberwithin{corollary}{section}
\numberwithin{example}{section}
\numberwithin{remark}{section}
\numberwithin{problem}{section}

\tableofcontents

\newpage
\section{Additional Details of The Proposed Algorithm}
\label{sec:algo_details}
Note that the use of PyTorch \FSDP does not lead to significance difference in the implementation of the norm test compared to its \DDP implementation. We assume that the gradients of different parameter shards are concatenated together in the following computation to simplify the representation. 

\subsection{The Overall Algorithm}
\begin{algorithm}[H]
    \caption{\DDPNorm or \FSDPNorm for \AdamW}
    \label{alg:full}
    \begin{algorithmic}
        \Require $w_1\in\RR^d$, $m_0=v_0=\zero_d\in\RR^d$, $(\alpha, \lambda, \varepsilon, \beta_1, \beta_2)\in\RPP^5$, $\calD_n=\{z_i\}_{i\in\setn}\subset\calZ$, number of workers $J\in\NN^*$, number of gradient accumulation steps $M\in\NN^*$, number of training samples $N\in\NN^*$, step counter $k=1$, processed sample counter $i=0$, initial (global) batch size $b_0$, initial microbatch size $b_{0,J}^M = b_0/(JM)$
        \While{$i < N$}
        \State Sample the \iid data batch (indices) $\calB_k$ uniformly from $\setn$ of size $b_k\coloneqq|\calB_k|$
        \State Split $\calB_k$ evenly to each worker $j\in\set{J}$, each with $ \calB_{k,j}$ of size $b_{k,J}$    
        \ForAll{$j=1,\ldots,J$ in parallel}
        \State Split $\calB_{k,j}$ evenly to each gradient accumulation step $m\in\set{M}$, each with $ \calB_{k,j}^m$ of size $b_{k,J}^M$
        \State Initialize $\nabla\scrL_{\calB_{k,j}}(w_k) = \zero_d$
        \For{$m=1, \ldots, M$}
        \State Compute $\frac1M\nabla\scrL_{\calB_{k,j}^m}(w_k)$
        \State Accumulate gradients $\nabla\scrL_{\calB_{k,j}}(w_k) \leftarrow \nabla\scrL_{\calB_{k,j}}(w_k) + \frac1M\nabla\scrL_{\calB_{k,j}^m}(w_k)$
        \EndFor
        \EndFor
        \State Compute the batch gradient $g_k\coloneqq\nabla\scrL_{\calB_k}(w_k)$ with \emph{all-reduce}
        \State Compute the approximate gradient variance $\hat\Var_{i\in\calB_k}(\nabla\ell_i(w_k))$ with \emph{all-reduce}
        \[\hspace*{35mm} \hat\Var_{i\in\calB_k}(\nabla\ell_i(w_k)) \coloneqq \frac1J\sum_{j\in\setJ}\left(\nabla\scrL_{\calB_{k,j}}(w_k) - g_k\right)^2 \]
        \State Compute the approximate norm test statistic 
        \[\hspace*{35mm} \sfT_k\equiv\sfT(w_k; \calB_k, \eta) \coloneqq \frac{\onenorm{\hat\Var_{i\in\calB_k}(\nabla\ell_i(w_k))}}{\eta^2 \|g_k\|^2}\]
        \If{$\sfT_k > b_k$}
        \State Increase the next global batch size $b_{k+1} = \lceil\sfT_k\rceil$
        \State Round up the microbatch size $b_{k+1,J}^M = \lceil b_{k+1}/(JM)\rceil$
        \State Update the minibatch size $b_{k+1, J} = Mb_{k+1,J}^M$ 
        \State Update the global batch size again $b_{k+1} = Jb_{k+1, J}$
        \Else
        \State $b_{k+1} = b_k$
        \EndIf
        \State $m_k = \beta_1 m_{k-1} + (1-\beta_1)g_k$
        \Comment{\AdamW}
        \State $v_k = \beta_2 v_{k-1} + (1-\beta_2) g_k^2$
        \State $\hat{m}_k = m_k\odot(1-\beta_1^k)^{-1}$
        \State $\hat{v}_k = v_k\odot(1-\beta_2^k)^{-1}$
        \State $w_{k+1} = (1 - \alpha\lambda)w_k - \alpha \hat{m}_k \odot (\hat{v}_k^{\half} + \varepsilon)^{-1}$
        \State $k \leftarrow k+1$
        \State $i \leftarrow i+b_k$
        \EndWhile
    \end{algorithmic}
\end{algorithm}

\newpage
\section{Proofs of Main Text}
\label{sec:proofs}
We give a brief sketch of the omitted proof of the main text in this section. Notice that the convergence analysis of the norm test for \Adam largely follows that in \citep{wang2023closing}, where more details and remarks of the analysis and rationales of its derivation can be found. 

\begin{remark}
    Despite the similarity of the proof techniques, we emphasize that our setting requires less restrictive assumptions. While \citet{wang2023closing} assume the stochastic oracle of the gradient (i.e., batch gradient in our case) has coordinate-wise affine variance, i.e., for any batch of samples $\calB\subseteq\calD_n$ and $(\sigma, \tau)\in\RPP^2$, we have
    \[(\forall i\in\setd)(\forall w\in\RR^d)\quad \Ex\left[(\partial_i \scrL_\calB(w))^2\right] \le \sigma^2 + \tau^2 (\partial_i\scrL(w))^2. \]
    We do not impose this global condition which is often difficult to verify in practical scenarios, but instead we increase the (next) batch size such that the condition of the coordinate-wise (exact variance) norm test with constant $\eta\in(0,1)$ is satisfied at the current iterate $w_k\in\RR^d$ with the current batch $\calB_k\subseteq\calD_n$:
    \[(\forall i\in\setd)\quad \Ex_k\left[(\partial_i \scrL_{\calB_k}(w_k) - \partial_i\scrL(w_k))^2\right] \le \eta^2 (\partial_i\scrL(w_k))^2, \]
    which implies 
    \[(\forall i\in\setd)\quad \Ex_k\left[(\partial_i \scrL_{\calB_k}(w_k))^2\right] \le (1+\eta^2) (\partial_i\scrL(w_k))^2, \]
    which is also known as the coordinate-wise \emph{expected strong growth} (E-SG) condition \citep{lau2024adadagrad}. Note that the coordinate-wise (E-SG) condition implies the coordinate-wise \emph{relaxed growth} (RG) condition \citep{bottou2018optimization}, adopting the nomenclature in \citep{khaled2023better}: 
    \[(\forall i\in\setd)\quad \Ex_k\left[(\partial_i \scrL_{\calB_k}(w_k))^2\right] \le \sigma^2 + \tau^2(\partial_i\scrL(w_k))^2, \]
    where $\tau^2 = 1+\eta^2$ and $\sigma\in\RPP$. Recall that we only require such a condition to hold at the current iterate $w_k$ with the current batch $\calB_k$, through the enforcement of the coordinate-wise (exact variance) norm test. Even though the exact variance test is not implemented in practice but its approximate version instead, this is often a good heuristic to justify the convergence of the test. 
\end{remark}

\paragraph{Additional notation.}
To simplify notation, we denote the full gradient by $\scrG_k\coloneqq\nabla\scrL(w_k)$, and its $i$th coordinate by $\scrGki\coloneqq \partial_i\scrL(w_k)$. 

\subsection{Technical Lemmas}

We state without proof the following technical lemmas from \citep{wang2023closing}. 

\begin{lemma}
    \label{lemma:bound2}
    Let $0<\beta_1^2<\beta_2<1$ and consider a sequence of real numbers $(a_n)_{n\in\NN^*}\subset\RR$. Let $b_0>0$, $b_k = \beta_2 b_{k-1} + (1-\beta_2)a_k^2$, $c_0=0$ and $c_k = \beta_1 c_{k-1} + (1-\beta_1)a_k$. We have the following
    inequality
    \begin{equation}
        \sumK \frac{|c_k|^2}{b_k} \le \frac{(1-\beta_1)^2}{(1-\beta_2)(1-\beta_1/\sqrt{\beta_2})^2} \left(\log\left(\frac{b_K}{b_0}\right) - K\log \beta_2\right). 
    \end{equation}   
\end{lemma}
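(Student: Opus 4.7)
The plan is to prove the bound through four steps: unroll the recursion for $c_k$, apply a carefully weighted Cauchy--Schwarz to bound $c_k^2$ by a geometric sum of $a_j^2$, swap sums and use $b_k \ge \beta_2^{k-j} b_j$ to reduce the problem to bounding $\sum_k a_k^2/b_k$, and then extract the logarithmic form via the elementary inequality $\log(1+x) \ge x/(1+x)$ applied to the second-moment recursion, followed by telescoping.

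More concretely, since $c_0 = 0$, unrolling gives $c_k = (1-\beta_1) \sum_{j=1}^k \beta_1^{k-j} a_j$. Setting $\rho \coloneqq \beta_1/\sqrt{\beta_2}$, which satisfies $\rho \in (0,1)$ by the hypothesis $\beta_1^2 < \beta_2$, I would split the weight $\beta_1^{k-j} = \rho^{(k-j)/2} \cdot (\rho\beta_2)^{(k-j)/2}$, noting that $\rho\beta_2 = \beta_1\sqrt{\beta_2}$. Cauchy--Schwarz then yields
\begin{equation*}
c_k^2 \le (1-\beta_1)^2 \Bigl(\sum_{j=1}^k \rho^{k-j}\Bigr) \Bigl(\sum_{j=1}^k (\beta_1\sqrt{\beta_2})^{k-j} a_j^2\Bigr) \le \frac{(1-\beta_1)^2}{1-\rho} \sum_{j=1}^k (\beta_1\sqrt{\beta_2})^{k-j} a_j^2.
\end{equation*}
Dividing by $b_k$, summing over $k$, and interchanging the order of summation reduces the problem to estimating $\sum_{k=j}^K (\beta_1\sqrt{\beta_2})^{k-j}/b_k$ for each $j$. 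Here the recursion implies $b_k \ge \beta_2 b_{k-1}$ and hence $b_k \ge \beta_2^{k-j} b_j$, so $(\beta_1\sqrt{\beta_2})^{k-j}/b_k \le \rho^{k-j}/b_j$, giving $\sum_{k=j}^K (\beta_1\sqrt{\beta_2})^{k-j}/b_k \le 1/[b_j(1-\rho)]$. Combining,
\begin{equation*}
\sum_{k=1}^K \frac{c_k^2}{b_k} \le \frac{(1-\beta_1)^2}{(1-\rho)^2} \sum_{j=1}^K \frac{a_j^2}{b_j}.
\end{equation*}

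For the final step, observe that $b_k - \beta_2 b_{k-1} = (1-\beta_2)a_k^2 \ge 0$, so $\log(b_k/(\beta_2 b_{k-1})) = \log(1 + (1-\beta_2)a_k^2/(\beta_2 b_{k-1}))$, and the inequality $\log(1+x) \ge x/(1+x)$ for $x \ge 0$ yields $\log(b_k/(\beta_2 b_{k-1})) \ge (1-\beta_2)a_k^2/b_k$. Summing this telescopes to $(1-\beta_2)\sum_{k=1}^K a_k^2/b_k \le \log(b_K/(\beta_2^K b_0)) = \log(b_K/b_0) - K\log\beta_2$. Plugging back into the previous display gives the stated bound.

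The main subtlety is choosing the Cauchy--Schwarz split so that both the weights used to control $c_k^2$ and the ratios $b_k/b_j$ contribute the \emph{same} factor $(1-\rho)^{-1}$, producing the squared denominator $(1-\beta_1/\sqrt{\beta_2})^2$; this is precisely where the condition $\beta_1^2 < \beta_2$ enters. The other non-trivial ingredient is recognizing that the logarithmic factor on the right-hand side must come from telescoping $\log(b_k/(\beta_2 b_{k-1}))$ rather than $\log(b_k/b_{k-1})$, which explains the $-K\log\beta_2$ term.
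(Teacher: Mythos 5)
Your proof is correct and complete: the unrolling of $c_k$, the Cauchy--Schwarz split of $\beta_1^{k-j}$ into $\rho^{(k-j)/2}(\beta_1\sqrt{\beta_2})^{(k-j)/2}$ balanced against $b_k \ge \beta_2^{k-j}b_j$ (each contributing a factor $(1-\beta_1/\sqrt{\beta_2})^{-1}$), and the telescoping of $\log(1+x)\ge x/(1+x)$ applied to $\log\bigl(b_k/(\beta_2 b_{k-1})\bigr)$ all check out, with $b_k>0$ guaranteeing every division is legitimate. Note that the paper itself states this lemma without proof, deferring to \citet{wang2023closing}; your argument is essentially the standard one from that line of work, so you have in effect supplied the missing self-contained proof rather than diverged from it.
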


\begin{lemma}
    Consider the \Adam iterates $(w_k)_{k\in\NN^*}$ generated by \eqref{eqn:Adam}. Then we have 
    \[(\forall k\in\NN^*)\quad |w_{k+1,i} - w_{k,i}| \le \alpha\frac{1-\beta_1}{\sqrt{1-\beta_2}\sqrt{1-\beta_1^2/\beta_2}} \le \alpha\frac{1-\beta_1}{\sqrt{1-\beta_2}\sqrt{1-\beta_1/\beta_2}}. \]
\end{lemma}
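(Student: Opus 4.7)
The plan is to reduce the bound on $|w_{k+1,i}-w_{k,i}|$ to a coordinate-wise bound on the ratio $|m_{k,i}|/\sqrt{v_{k,i}}$, and then establish this ratio bound by a weighted Cauchy--Schwarz argument. The Adam update in \eqref{eqn:Adam} immediately gives $|w_{k+1,i}-w_{k,i}| = \alpha\,|m_{k,i}|/\sqrt{v_{k,i}}$, so it suffices to upper bound $|m_{k,i}|/\sqrt{v_{k,i}}$ uniformly in $k$ and $i$. Unrolling the scalar recurrences with $m_0=v_0=0$ (the case $v_0>0$ only strengthens $v_{k,i}$ and so preserves the bound we derive) yields
\[
m_{k,i} = (1-\beta_1)\sum_{j=1}^k \beta_1^{k-j}\,g_{j,i}, \qquad v_{k,i} = (1-\beta_2)\sum_{j=1}^k \beta_2^{k-j}\,g_{j,i}^2 .
\]

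The crucial step is the weight split $\beta_1^{k-j} = \sqrt{(\beta_1^2/\beta_2)^{k-j}}\cdot\sqrt{\beta_2^{k-j}}$, which pairs each summand in $m_{k,i}$ with the natural weight appearing in $v_{k,i}$. Cauchy--Schwarz then gives
\[
m_{k,i}^2 \leq (1-\beta_1)^2 \Bigl(\sum_{j=1}^k (\beta_1^2/\beta_2)^{k-j}\Bigr)\Bigl(\sum_{j=1}^k \beta_2^{k-j}\,g_{j,i}^2\Bigr).
\]
The first geometric sum is bounded by $1/(1-\beta_1^2/\beta_2)$, using the implicit assumption $\beta_1^2 < \beta_2$ (which is required merely for the right-hand side of the lemma to be meaningful), and the second sum equals $v_{k,i}/(1-\beta_2)$. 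Dividing by $v_{k,i}$, taking square roots, and multiplying by $\alpha$ produces $|w_{k+1,i}-w_{k,i}| \leq \alpha(1-\beta_1)/[\sqrt{1-\beta_2}\sqrt{1-\beta_1^2/\beta_2}]$, which is the first claimed inequality.

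For the second inequality, I would observe that $\beta_1 \in (0,1)$ implies $\beta_1^2 \leq \beta_1$, so $1-\beta_1^2/\beta_2 \geq 1-\beta_1/\beta_2$, and the conclusion follows from monotonicity of the square root and of the reciprocal on the positive reals (assuming $\beta_1<\beta_2$ so that the second expression is itself defined). The main obstacle---and really the only nontrivial step---is identifying the decomposition $\beta_1^{k-j} = \sqrt{(\beta_1^2/\beta_2)^{k-j}}\sqrt{\beta_2^{k-j}}$ in the Cauchy--Schwarz application; after that, the remaining geometric-sum manipulations are routine. The resulting estimate is precisely the pointwise analogue of the summed inequality in the preceding technical lemma, from which it shares the key algebraic identity.
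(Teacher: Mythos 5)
Your proof is correct and follows essentially the same route as the paper, which sketches the argument as a direct consequence of the definition of the \Adam iterate together with the Cauchy--Schwarz inequality; your unrolling of the recursions and the weight split $\beta_1^{k-j} = (\beta_1/\sqrt{\beta_2})^{k-j}(\sqrt{\beta_2})^{k-j}$ is exactly the standard way to flesh out that sketch. The handling of the $v_0>0$ case and the monotonicity step for the second inequality are both fine.
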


\begin{proof}[Proof Sketch]
    This is due to the definition of the \Adam iterate and Cauchy--Schwarz's inequality. 
\end{proof}

\subsection{Proof of \Cref{thm:adam}}
Since the proof of \Cref{thm:adam} is highly similar to that in \citep{wang2023closing}, we just provide a proof sketch. We state the formal theorem as follows. 
\begin{theorem}[Formal version of \Cref{thm:adam}]
    \label{thm:formal}
    Suppose that \Cref{ass:L-smooth} holds. Let $(w_k)_{k\in\NN^*}$ be the \Adam iterates generated by \eqref{eqn:Adam}, where the batch size $b_k\coloneqq|\calB_k|$ is chosen such that the coordinate-wise (exact variance) norm test with constant $\eta\in\left(0,1\right)$ is satisfied at each iteration $k\in\NN^*$. Then, if $0<\beta_1\le\sqrt{\beta_2} - 8(1+\eta^2)(1-\beta_2)/\beta_2^2$ and $\beta_2\in(0,1)$, we have     
    \begin{multline}
        \sumK\Ex[\norm{\nabla\scrL(w_k)}] \\
        \le \sqrt{c_2 + 2c_1\sumd\left[\log\left(2(K+1) \sumd \sqrt{v_{0,i}+\sigma^2} + 24d\frac{\tau^2c_1}{\sqrt{\beta_2}} \log \left(d\frac{\tau^2c_1}{\sqrt{\beta_2}}\right) + \frac{12\tau^2}{\sqrt{\beta_2}}c_2\right)\right]} \\
        \times \sqrt{2(K+1)\sumd\sqrt{v_{0,i} + \sigma^2} + 24d\frac{\tau^2c_1}{\sqrt{\beta_2}}\log \left(d\frac{\tau^2c_1}{\sqrt{\beta_2}} \right) + \frac{12\tau^2}{\sqrt{\beta_2}}c_2 }, 
    \end{multline}
    where $v_{0,i}$ is the $i$-coordinate of $v_0$, $\tau^2=1+\eta^2$, $\sigma\in\RPP$, 
    \begin{align*}
        c_1 &\coloneqq \frac{32L\alpha\left(1+\beta_1/\sqrt{\beta_2}\right)^3}{(1-\beta_2)\left(1-\beta_1/\sqrt{\beta_2}\right)^3} + \frac{16\beta_1^2\sigma(1-\beta_1)}{\beta_2\sqrt{1-\beta_2}\left(1-\beta_1/\sqrt{\beta_2}\right)^3} + \frac{64(1+\sigma^2)\sigma^2 L^2\alpha^2 d}{\beta_2^2\left(1-\beta_1/\sqrt{\beta_2}\right)^4\sigma(1-\beta_2)^{\sfrac32}}, \\
        c_2 &\coloneqq \frac{8(1-\beta_1/\sqrt{\beta_2})}{\alpha(1-\beta_1)} + \frac{32}{\beta_2\left(1-\beta_1/\sqrt{\beta_2}\right)^2}\sumd \Ex\left[\frac{\scrG_{1,i}^2}{\sqrt{\tilde{v}_{1,i}}}\right] + 2c_1\sumd\left(\log\left(\frac{1}{\sqrt{\beta_2 v_{0,i}}}\right) - K\log\beta_2\right), \\
        u_k &\coloneqq \frac{w_k - \beta_1 w_{k-1}/\sqrt{\beta_2}}{1-\beta_1/\sqrt{\beta_2}}. 
    \end{align*}
    
\end{theorem}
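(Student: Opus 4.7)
Following the approach of \citet{wang2023closing}, I would proceed via an auxiliary shifted iterate and a careful descent lemma that absorbs the momentum. Specifically, I introduce $u_k = (w_k - (\beta_1/\sqrt{\beta_2}) w_{k-1})/(1 - \beta_1/\sqrt{\beta_2})$ as in the theorem statement. A direct calculation shows that $u_{k+1}-u_k$ has a cleaner expression in terms of $m_k \odot v_k^{-1/2}$ and $m_{k-1}\odot v_{k-1}^{-1/2}$ than the raw Adam step $w_{k+1}-w_k$, essentially converting the heavy-ball-like update into one that behaves like adaptive SGD on $u_k$, modulo a controllable drift $\|u_k-w_k\|$ handled via $L$-smoothness and the uniform step-size bound of Lemma~A.3.

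The next step is to apply $L$-smoothness to $\scrL(u_{k+1})-\scrL(u_k)$, producing a first-order term $\langle \nabla\scrL(u_k), u_{k+1}-u_k\rangle$ and a quadratic error bounded by $\tfrac{L}{2}\|u_{k+1}-u_k\|^2$. The first-order term is decomposed coordinate-wise into (i) a ``good'' descent term proportional to $-\scrG_{k,i}^2/\sqrt{\tilde v_{k,i}}$, where $\tilde v_{k,i}$ is a gradient-independent surrogate for $v_{k,i}$ (needed because $v_k$ correlates with $g_k$), plus (ii) perturbation terms of two types: differences of the form $g_{k,i}(\sqrt{\tilde v_{k,i}}-\sqrt{v_{k,i}})/(\sqrt{v_{k,i}}\sqrt{\tilde v_{k,i}})$, controlled using $|\sqrt{a}-\sqrt{b}|=|a-b|/(\sqrt{a}+\sqrt{b})$; and momentum-mismatch terms coming from rewriting $m_k=\beta_1 m_{k-1}+(1-\beta_1)g_k$ in the $u_k$-increment. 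Taking conditional expectation and invoking the coordinate-wise relaxed growth condition $\Ex_k[g_{k,i}^2]\le \sigma^2+\tau^2\scrG_{k,i}^2$ with $\tau^2=1+\eta^2$ (which is implied by the coordinate-wise exact variance norm test enforced at iterate $w_k$, as explained in the preceding remark) allows me to turn the expectations of the perturbation terms back into controllable multiples of $\scrG_{k,i}^2/\sqrt{\tilde v_{k,i}}$ plus noise.

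Summing the descent inequality over $k=1,\dots,K$ and telescoping $\scrL(u_{k+1})-\scrL(u_k)$, then applying Lemma~A.2 to the EMA sequences $(m_k,v_k)$, yields a bound of the form
\[
\sum_{k=1}^K \Ex\left[\frac{\scrG_{k,i}^2}{\sqrt{\tilde v_{k,i}}}\right] \le c_2 + 2c_1 \sum_{i=1}^d \log\Ex[\sqrt{v_{K,i}}],
\]
for the constants $c_1,c_2$ defined in Theorem~\ref{thm:formal}. The assumption $\beta_1\le\sqrt{\beta_2}-8(1+\eta^2)(1-\beta_2)/\beta_2^2$ is precisely what is required to absorb the noise-induced coefficients (carrying $\tau^2=1+\eta^2$) into the leading ``good'' term rather than letting them dominate; this is where the norm-test constant $\eta$ enters the allowed step-size range.

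Finally, I convert this estimate on $\sum_k \Ex[\|\nabla\scrL(w_k)\|^2/\sqrt{\tilde v_k}]$ into the stated bound on $\sum_k \Ex[\|\nabla\scrL(w_k)\|]$ by Cauchy--Schwarz,
\[
\sum_{k=1}^K \Ex[\|\scrG_k\|] \le \sqrt{\sum_{k,i}\Ex\!\left[\frac{\scrG_{k,i}^2}{\sqrt{\tilde v_{k,i}}}\right]}\cdot\sqrt{\sum_{k,i}\Ex[\sqrt{\tilde v_{k,i}}]},
\]
bounding the first factor by the previous display and the second factor by a self-bounding induction: since $v_{k,i}$ is an EMA of $g_{k,i}^2$ whose conditional expectation is controlled by $\sigma^2+\tau^2\scrG_{k,i}^2$ via the norm test, one obtains $\sum_k \Ex[\sqrt{v_{k,i}}]=\mathcal{O}(K)$ together with the explicit logarithmic correction appearing in the statement. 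I expect the main obstacle to be this last bookkeeping step: because $v_k^{-1/2}$ is used without any $\varepsilon$ floor, neither factor in the Cauchy--Schwarz split admits a trivial lower/upper bound, so both the logarithmic term from Lemma~A.2 and the self-bounding estimate for $\sum_k \sqrt{v_{k,i}}$ must be closed \emph{simultaneously}, which is precisely why the carefully chosen constants $c_1,c_2$ and the tight restriction on $\beta_1$ are needed.
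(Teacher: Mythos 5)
Your proposal is correct and follows essentially the same route as the paper: introducing the shifted iterate $u_k$, deriving a descent lemma on $\scrL(u_k)$ via $L$-smoothness, using the coordinate-wise E-SG/RG condition implied by the norm test, invoking the EMA lemma (\Cref{lemma:bound2}), and closing with Cauchy--Schwarz against a separate bound on $\sum_{k,i}\Ex[\sqrt{\tilde v_{k,i}}]$. The only cosmetic difference is that you describe the last step as a ``self-bounding induction,'' whereas the paper's \Cref{lemma:bound3} implements this via a divide-and-conquer split over the cases $|\scrG_{k,i}|\ge\sigma/\tau$ and $|\scrG_{k,i}|\le\sigma/\tau$; these are two descriptions of the same simultaneous-closure argument.
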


The proof consists of deriving a descent lemma on the sequence $u_k \coloneqq \frac{w_k - \beta_1 w_{k-1}/\sqrt{\beta_2}}{1-\beta_1/\sqrt{\beta_2}}$.

\begin{lemma}
    \label{lemma:descent}
    Suppose that all the assumptions in \Cref{thm:formal} hold. We also define the function $\varphi_k\coloneqq \Ex\left[-\alpha\lrdotp{\scrG_k}{\scrG_k\odot\tilde{v}_{k+1}^{-\half}}\right]$. Then we have     
    \begin{align*}
        &\quad\;\Ex[\scrL(u_{k+1})] \\
        &\le \Ex[\scrL(u_k)] - \frac{\alpha(1-\beta_1)}{4(1-\beta_1/\sqrt{\beta_2})}\Ex\left[-\alpha\lrdotp{\scrG_k}{\scrG_k\odot\tilde{v}_{k}^{-\half}}\right] + \frac{2\alpha\sigma\sqrt{1-\beta_2}}{(1-\beta_1^2/\beta_2)^2}\sumd \left[\frac{g_{k,i}^2}{v_{k,i}}\right] \\
        &\quad +\frac{4\alpha\tau^2}{(1-\beta_1/\sqrt{\beta_2})^2\sqrt{\beta_2}}\sumd \Ex\left[\frac{1}{\beta_2}\varphi_{k-1} - \varphi_k\right] + \sumd \frac{2\alpha\sigma\sqrt{1-\beta_2}}{(1-\beta_1)(1-\beta_1/\sqrt{\beta_2})}\Ex\left[\frac{m_{k,i}^2}{v_{k,i}}\right] \\
        &\qquad + \frac{64d(1+\tau^2)\tau^2L^2\alpha^3}{\beta_2^2(1-\beta_1/\sqrt{\beta_2})^3(1-\beta_1)\sigma\sqrt{1-\beta_2}}\cdot\Ex\left[\norm{m_{k-1}\odot v_{k-1}^{-\half}}^2\right] \\
        &\qquad\quad + \sumd \frac{2\alpha\sigma\beta_1^2\sqrt{1-\beta_2}}{\beta_2(1-\beta_1)(1-\beta_1/\sqrt{\beta_2})}\Ex\left[\frac{m_{k-1,i}^2}{v_{k-1,i}}\right] \\
        &\qquad\qquad + L\Ex\left[4\alpha^2\left(\frac{\beta_1/\sqrt{\beta_2}}{1-\beta_1/\sqrt{\beta_2}}\right)^2\norm{m_{k-1}\odot v_{k-1}^{-\half}}^2 + 3\alpha^2\left(\frac{1}{1-\beta_1/\sqrt{\beta_2}}\right)^2 \norm{m_{k}\odot v_{k}^{-\half}}^2\right]. 
    \end{align*}    
\end{lemma}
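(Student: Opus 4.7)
The plan is to derive the one-step inequality by applying $L$-Lipschitz smoothness to the auxiliary sequence $(u_k)_{k\in\NN^*}$ rather than directly to $(w_k)_{k\in\NN^*}$. The reason to work with $u_k$ is that its one-step increment partially telescopes the momentum term in the \Adam update, so that $u_{k+1}-u_k$ has a principal part aligned with $g_k\odot v_k^{-\half}$ together with a small correction measuring how much $v_k^{-\half}$ changes between consecutive steps.

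First I would derive the explicit identity
\begin{equation*}
u_{k+1} - u_k = \frac{-\alpha(1-\beta_1)}{1 - \beta_1/\sqrt{\beta_2}}\, g_k \odot v_k^{-\half} + \frac{\alpha \beta_1}{1-\beta_1/\sqrt{\beta_2}}\, m_{k-1}\odot\bigl(\beta_2^{-\half} v_{k-1}^{-\half} - v_k^{-\half}\bigr),
\end{equation*}
using $m_k = \beta_1 m_{k-1} + (1-\beta_1) g_k$ together with the definitions of $u_k$ and $w_{k+1}$. The second summand is the genuine correction: coordinate-wise one has $0\le \beta_2^{-\half}v_{k-1,i}^{-\half} - v_{k,i}^{-\half} \le v_{k,i}^{-\half}(1-\beta_2)g_{k,i}^2/(2\beta_2 v_{k-1,i})$, which follows from the recursion $v_{k,i} = \beta_2 v_{k-1,i} + (1-\beta_2) g_{k,i}^2$ and the elementary inequality $(1+x)^{\half} - 1 \le x/2$ for $x\ge 0$.

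Next I would apply $L$-smoothness at $u_k$, replace $\nabla\scrL(u_k)$ by $\scrG_k$ plus an error bounded by $L\|u_k - w_k\| = L\alpha(\beta_1/\sqrt{\beta_2})/(1-\beta_1/\sqrt{\beta_2})\cdot\|m_{k-1}\odot v_{k-1}^{-\half}\|$ (this length is immediate from a computation analogous to that giving the previous identity), and decompose $\lrdotp{\scrG_k}{g_k\odot v_k^{-\half}}$ into a predictable part $\lrdotp{\scrG_k}{\scrG_k\odot v_k^{-\half}}$ and a noise part $\lrdotp{\scrG_k}{(g_k-\scrG_k)\odot v_k^{-\half}}$. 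Taking conditional expectation and invoking the coordinate-wise norm test---which yields $\Ex_k[g_{k,i}^2]\le \sigma^2 + \tau^2\scrG_{k,i}^2$ with $\tau^2 = 1+\eta^2$---the predictable part, after a Young split that keeps a quarter of its coefficient as the negative descent term and uses three quarters to absorb residual noise and the smoothness error, produces the descent coefficient $\alpha(1-\beta_1)/(4(1-\beta_1/\sqrt{\beta_2}))$ appearing in the stated bound. The quadratic $\|u_{k+1}-u_k\|^2$ (bounded via Cauchy--Schwarz from the identity above), the cross term from the smoothness error, and the correction summand are each then estimated by one of $\|m_k\odot v_k^{-\half}\|^2$, $\|m_{k-1}\odot v_{k-1}^{-\half}\|^2$, $g_{k,i}^2/v_{k,i}$, or $m_{k,i}^2/v_{k,i}$, reproducing the four residual sums on the right-hand side. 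Finally, the telescoping coefficient on $\Ex[\varphi_{k-1}/\beta_2 - \varphi_k]$ arises from the noise piece of the inner product after expanding $v_k^{-\half}$ as $\beta_2^{-\half} v_{k-1}^{-\half}$ plus the same small correction already controlled above.

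The main obstacle I expect is not any individual bound but the precise coordination of constants across all of them: the factor $\tau^2 = 1+\eta^2$ enters multiplicatively wherever the norm test is invoked, and the correction $\beta_2^{-\half}v_{k-1}^{-\half} - v_k^{-\half}$ must be tracked in a form compatible with the subsequent telescoping in \Cref{thm:formal}, which imposes the constraint $\beta_1\le \sqrt{\beta_2} - 8(1+\eta^2)(1-\beta_2)/\beta_2^2$. Because no $\varepsilon$ stabilizer is included in $v_k^{-\half}$, every upper bound must ultimately take the form $g_{k,i}^2/v_{k,i}$ or $m_{k,i}^2/v_{k,i}$ so that \Cref{lemma:bound2} produces only logarithmic factors when this descent inequality is telescoped downstream.
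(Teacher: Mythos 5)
Your overall strategy matches the paper's: the paper's proof is a one-sentence sketch that defers to \citet{wang2023closing}, and your reconstruction of the mechanics (smoothness applied along $u_k$, the increment identity for $u_{k+1}-u_k$, the coordinate-wise bound on $\beta_2^{-\half}v_{k-1}^{-\half}-v_k^{-\half}$, the $\|u_k - w_k\|$ error, and the Young split retaining a quarter of the descent coefficient) is correct and aligned with that reference. However, there is one genuine gap.

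You propose to decompose $\lrdotp{\scrG_k}{g_k\odot v_k^{-\half}}$ into a ``predictable'' part $\lrdotp{\scrG_k}{\scrG_k\odot v_k^{-\half}}$ and a ``noise'' part $\lrdotp{\scrG_k}{(g_k-\scrG_k)\odot v_k^{-\half}}$, and take conditional expectation. This step as written does not go through, because $v_k = \beta_2 v_{k-1}+(1-\beta_2)g_k^2$ is \emph{not} $\calF_k$-measurable: it depends on the current minibatch $g_k$. So $\Ex_k[\lrdotp{\scrG_k}{(g_k-\scrG_k)\odot v_k^{-\half}}]$ is not zero, and the ``predictable part'' is not predictable. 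This is exactly why the lemma's statement contains the object $\tilde{v}_k$ (and why $\varphi_k$ is defined with $\tilde{v}_{k+1}$): the argument requires introducing an $\calF_k$-measurable surrogate for $v_k$ (e.g.\ $\tilde{v}_{k,i} = \beta_2 v_{k-1,i} + (1-\beta_2)\Ex_k[g_{k,i}^2]$, or in the notation of the present paper $\beta_2 v_{k-1,i} + (1-\beta_2)(\sigma^2 + \tau^2\scrG_{k,i}^2)$), performing the predictable/noise decomposition with $\tilde{v}_k^{-\half}$ in place of $v_k^{-\half}$, and then separately bounding the discrepancy $v_{k,i}^{-\half} - \tilde{v}_{k,i}^{-\half}$ (which is controlled by a term proportional to $g_{k,i}^2/(v_{k,i}\tilde{v}_{k,i}^{\half})$, eventually producing the $g_{k,i}^2/v_{k,i}$ and $m_{k,i}^2/v_{k,i}$ residuals). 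Your sketch never introduces $\tilde{v}_k$, so it cannot reproduce the descent term $-\frac{\alpha(1-\beta_1)}{4(1-\beta_1/\sqrt{\beta_2})}\Ex[-\alpha\lrdotp{\scrG_k}{\scrG_k\odot\tilde{v}_{k}^{-\half}}]$ or the auxiliary quantity $\varphi_k$ as they actually appear. Once the surrogate is inserted, the rest of your outline (the correction term bound, the Cauchy--Schwarz on $\|u_{k+1}-u_k\|^2$, the factor $\tau^2=1+\eta^2$ from the norm test, and the telescoping $\Ex[\varphi_{k-1}/\beta_2-\varphi_k]$) falls into place as in \citet{wang2023closing}.
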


\begin{proof}[Proof Sketch]
    This bound is derived by bounding the ``first-order term'' and the ``second-order term'', similar to the derivation of a descent lemma for Lipschitz smooth functions but on the sequence $(u_k)_{k\in\NN^*}$. 
\end{proof}

\begin{lemma}
    \label{lemma:bound3}
    Suppose that all the assumptions in \Cref{thm:formal} hold. Then we have     
    \[\sum_{k=1}^{K+1}\sumd \Ex[\tilde{v}_{k,i}^{\half}] \le 2(K+1)\sumd \sqrt{v_{0,i} + \sigma^2} + \frac{24d\tau^2c_1}{\sqrt{\beta_2}} \log \left(\frac{d\tau^2c_1}{\sqrt{\beta_2}}\right) + \frac{12\tau^2c_2}{\sqrt{\beta_2}}. \] 
\end{lemma}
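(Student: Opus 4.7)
The plan is to establish a self-bounding inequality for $S_K \coloneqq \sum_{k=1}^{K+1}\sum_{i=1}^d \Ex[\tilde{v}_{k,i}^{\half}]$ and resolve it via an elementary log-lemma so that the three summands in the claim appear with the stated numerical constants.

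First, I would obtain a pointwise bound on $\Ex[\tilde{v}_{k,i}^{\half}]$ by unrolling the exponential-moving-average recursion $v_{k,i} = \beta_2 v_{k-1,i} + (1-\beta_2)g_{k,i}^2$. Taking expectation and invoking the coordinate-wise expected strong growth condition that our norm test enforces at every iterate with constant $\tau^2 = 1+\eta^2$ yields $\Ex[v_{k,i}] \le v_{0,i} + \sigma^2 + \tau^2 (1-\beta_2)\sum_{j=1}^k \beta_2^{k-j}\Ex[\scrG_{j,i}^2]$. Applying Jensen's inequality (by concavity of $\sqrt{\cdot}$) together with $\sqrt{a+b}\le\sqrt a+\sqrt b$ then gives $\Ex[\tilde{v}_{k,i}^{\half}] \le \sqrt{v_{0,i}+\sigma^2} + \tau\sqrt{(1-\beta_2)\sum_{j=1}^k\beta_2^{k-j}\Ex[\scrG_{j,i}^2]}$. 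Summing over $k\in\setK$ and $i\in\setd$ immediately accounts for the leading term $2(K+1)\sum_{i=1}^d\sqrt{v_{0,i}+\sigma^2}$, the factor $2$ absorbing a routine rearrangement after recombining the two square roots.

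Next, to control the residual contributions depending on $\sum_k \Ex[\scrG_{k,i}^2]$, I would invoke \Cref{lemma:descent}, which bounds the weighted gradient sum $\sum_k \Ex[\langle \scrG_k, \scrG_k\odot\tilde{v}_k^{-\half}\rangle]$ in terms of $c_2$ modulo error terms of the form $\sum_k \Ex[m_{k,i}^2/v_{k,i}]$ and $\sum_k \Ex[\norm{m_{k-1}\odot v_{k-1}^{-\half}}^2]$. Each of these error terms is then bounded by a logarithmic expression $\log(v_{K,i}/v_{0,i}) - K\log\beta_2$ with explicit prefactor via \Cref{lemma:bound2}. The key algebraic maneuver is to write $\scrG_{k,i}^2 = (\scrG_{k,i}^2/\tilde{v}_{k,i}^{\half}) \cdot \tilde{v}_{k,i}^{\half}$, so that Cauchy--Schwarz makes the weighted sum from the descent lemma reappear multiplied by $S_K$ itself, producing a self-referential inequality.

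Combining the two steps and bounding the terminal $\tilde{v}_{K+1,i}$ inside the logarithm by $S_K/(K+1)$ (via Jensen in reverse), I arrive at a relation of the shape $S_K \le A + B\log(S_K/D)$, where $A$ collects the leading $(K+1)$-term and the $12\tau^2 c_2/\sqrt{\beta_2}$ contribution, and $B = d\tau^2 c_1/\sqrt{\beta_2}$. Applying the elementary log-resolution inequality (``$x \le a + b\log(x/c)$ with $x,b>0$ implies $x \le 2a + 2b\log(2b/c)$'') then produces precisely the stated bound with the numerical constants $24$, $12$, and $2$. The main obstacle will be the careful bookkeeping in the second step: each residual term of \Cref{lemma:descent} carries its own explicit prefactor involving $L$, $\alpha$, $\sigma$, $\beta_1$, $\beta_2$, and $d$, and these must be repackaged exactly so as to aggregate into the combined constant $c_1$ without absorbing extra dependencies. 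The derivation parallels that of \citet{wang2023closing}, but the coordinate-wise E-SG guarantee---which holds at the current iterate only, courtesy of the norm test---replaces their global affine-variance assumption, and this substitution must be verified at every step to ensure no additional hypotheses are silently invoked.
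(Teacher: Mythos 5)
Your proposal departs from the paper's strategy and contains a step that does not hold. The paper's proof sketch states that Lemma~\ref{lemma:bound3} is obtained by a \emph{divide-and-conquer} on the gradient magnitude, treating the cases $|\scrG_{k,i}|\ge\sigma/\tau$ and $|\scrG_{k,i}|\le\sigma/\tau$ separately. You never invoke this threshold. That decomposition is not cosmetic: the case $|\scrG_{k,i}|\ge\sigma/\tau$ is exactly what lets one absorb $\sigma^2$ into $\tau^2\scrG_{k,i}^2$ so that the quantity $\scrG_{k,i}^2/\tilde v_{k,i}^{\half}$ controlled by the descent lemma appears cleanly, while the complementary case $|\scrG_{k,i}|\le\sigma/\tau$ contributes a per-iteration constant that gives the leading $(K+1)$-term. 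Your replacement — unrolling the recursion, Jensen, and a self-referential logarithmic inequality — is a different route whose bookkeeping is not shown to reproduce the specific constants $24$ and $12$; you merely assert they ``pop out.''

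More concretely, the step ``bounding the terminal $\tilde v_{K+1,i}$ inside the logarithm by $S_K/(K+1)$ via Jensen in reverse'' is invalid. Jensen gives $\Ex[\tilde v_{K+1,i}^{\half}]\le\left(\Ex[\tilde v_{K+1,i}]\right)^{\half}$, and there is no reversed direction to invoke; nor is the terminal value of an exponential moving average that has been accumulating gradient mass below the time-average $S_K/(K+1)$ in general — for a growing trajectory it is typically the largest term in the sum. Without a substitute, the self-referential inequality $S_K\le A+B\log(S_K/D)$ cannot be set up, and the log-resolution lemma you quote (``$x\le a+b\log(x/c)$ implies $x\le 2a+2b\log(2b/c)$'') is itself stated without proof and is not standard; its validity depends delicately on the relative sizes of $a$, $b$, $c$. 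The divide-and-conquer argument the paper follows from \citet{wang2023closing} avoids any fixed-point resolution by handling the small-gradient regime directly, which is why it yields the explicit constants in the claim.
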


\begin{proof}[Proof Sketch]
    This bound is derived by a \emph{divide-and-conquer} approach, considering the cases $|\scrG_{k,i}|\ge\sigma/\tau$ and $|\scrG_{k,i}|\le\sigma/\tau$ respectively. 
\end{proof}

\begin{proof}[Proof Sketch of \Cref{thm:adam}]
    The final bound is derived by first summing the inequality in \Cref{lemma:descent} with the assumed condition of $(\beta_1, \beta_2)$. Further application of \Cref{lemma:bound2}, Cauchy--Schwarz's inequality and \Cref{lemma:bound3} implies the desired result. 
\end{proof}

\section{Details of Numerical Experiments}
\label{sec:expt_details}
We provide a summary table for the architecture of the language models we pretrained. More details of these models can be found at \citep{microllama2024,zhang2024tinyllama,openlm2023openllama}. 

\begin{table}[h]    
    \centering
    \vspace*{2mm}
    \begin{tabular}{r|rrr}
        Model & MicroLlama 300M & TinyLlama 1.1B & OpenLlama 3B \\
        \midrule
        $n_{\mathrm{params}}$ & 304.6M & 1.1B & 3.4B \\
        $d_{\mathrm{model}}$ & 2048 & 2048 & 2048 \\
        $n_{\mathrm{layers}}$ & 12 & 22 & 26 \\
        $n_{\mathrm{heads}}$ & 12 & 32 & 32 \\
        $d_{\mathrm{head}}$ & 64 & 64 & 100 \\       
        \bottomrule
    \end{tabular}
    \vspace*{2.5mm}
    \caption{Specifications of models}
    \label{table:settings}
\end{table}

\noindent We also summarize the training hyperparameters of the three sets of experiments in the following tables.

\subsection{MicroLlama 300M}

\begin{table}[h]
    \centering
    \vspace*{2mm}
    \begin{tabular}{lc}
        Model & MicroLlama 300M \\
        \midrule
        Training samples (sequences) & 2000000  \\
        Learning rate schedule & Linear warmup + cosine decay \\
        Learning rate warmup (samples) & 20000 (1\% of training samples) \\
        Sequence length (tokens) & 2048 \\        
        Optimizer & \AdamW \\
        Optimizer scaling rule & None \\
        $(\beta_1, \beta_2)$ & $(0.9, 0.95)$ \\
        $\varepsilon$ & $10^{-8}$ \\
        Peak learning rate & 0.0004 \\
        Minimum learning rate & 0.00004 \\
        Base micro batch size & 4 \\
        Maximum micro batch size & 8 \\
        Base global batch size & 256 \\
        Maximum global batch size & 8192 \\
        Base gradient accumulation steps & 16 \\
        Data-parallel size & 4 \\
        Weight decay & 0.1 \\
        Weight decay skip bias & No \\
        Precision & \texttt{bfloat16} \\
        Gradient clipping & $1.0$ \\
        Test interval & 1 \\
        \bottomrule
    \end{tabular}
    \vspace*{2.5mm}
    \caption{Training hyperparameters for MicroLlama 300M}
    \label{table:hyperparams_microllama}
\end{table}

\newpage
\subsection{TinyLlama 1.1B}

\begin{table}[h]
    \centering
    \vspace*{2mm}
    \begin{tabular}{lc}
        Model & TinyLlama 1.1B \\
        \midrule
        Training samples (sequences) & 2000000  \\
        Learning rate schedule & Linear warmup + cosine decay \\
        Learning rate warmup (samples) & 20000 (1\% of training samples) \\
        Sequence length (tokens) & 2048 \\        
        Optimizer & \AdamW \\
        Optimizer scaling rule & None \\
        $(\beta_1, \beta_2)$ & $(0.9, 0.95)$ \\
        $\varepsilon$ & $10^{-8}$ \\
        Peak learning rate & 0.0004 \\
        Minimum learning rate & 0.00004 \\
        Base micro batch size & 4 \\
        Maximum micro batch size & 8 \\
        Base global batch size & 128 \\
        Maximum global batch size & 8192 \\
        Base gradient accumulation steps & 16 \\
        Data-parallel size & 4 \\
        Weight decay & 0.1 \\
        Weight decay skip bias & No \\
        Precision & \texttt{bfloat16} \\
        Gradient clipping & $1.0$ \\
        Test interval & 1 \\
        \bottomrule
    \end{tabular}
    \vspace*{2.5mm}
    \caption{Training hyperparameters for TinyLlama 1.1B}
    \label{table:hyperparams_tinyllama}
\end{table}

\newpage
\subsection{OpenLlama 3B}

\begin{table}[h]
    \centering
    \vspace*{2mm}
    \begin{tabular}{lc}
        Model & OpenLlama 3B \\
        \midrule
        Training samples (sequences) & 2000000  \\
        Learning rate schedule & Linear warmup + cosine decay \\
        Learning rate warmup (samples) & 20000 (1\% of training samples) \\
        Sequence length (tokens) & 512 \\        
        Optimizer &  \AdamW \\
        Optimizer scaling rule & None \\
        $(\beta_1, \beta_2)$ & $(0.9, 0.95)$ \\
        $\varepsilon$ & $10^{-8}$ \\
        Peak learning rate & 0.0004 \\
        Minimum learning rate & 0.00004 \\
        Base micro batch size & 4 \\
        Maximum micro batch size & 8 \\
        Base global batch size & 128 \\
        Maximum global batch size & 8192 \\
        Base gradient accumulation steps & 16 \\
        Data-parallel size & 4 \\
        Weight decay & 0.1 \\
        Weight decay skip bias & No \\
        Precision & \texttt{bfloat16} \\
        Gradient clipping & $1.0$ \\
        Test interval & 1 \\
        \bottomrule
    \end{tabular}
    \vspace*{2.5mm}
    \caption{Training hyperparameters for OpenLlama 3B}
    \label{table:hyperparams_openllama_3b}
\end{table}
	
\end{document}